\documentclass[manuscript,acmsmall,nonacm]{acmart}
\AtBeginDocument{%
  }

\acmJournal{IMWUT}
\setcopyright{acmlicensed}
\copyrightyear{2026}
\acmYear{2026}
\acmDOI{XXXXXXX.XXXXXXX}

\usepackage{amsmath,amsthm}
\usepackage{booktabs}
\usepackage{graphicx}
\usepackage{xcolor}
\usepackage{hyperref}
\usepackage{cleveref}
\usepackage{multirow}
\usepackage{enumitem}
\usepackage{balance}
\newtheorem{proposition}{Proposition}
\newtheorem{lemma}{Lemma}
\newtheorem{theorem}{Theorem}
\newtheorem{corollary}{Corollary}
\newtheorem{definition}{Definition}
\begin{document}

\title{Authority Inversion in LLM-Mediated Ubiquitous Systems: When Models Trust Users Over Sensors}
\author{Long Zhang}
\affiliation{%
  \institution{School of Computer Science and Engineering, South China University of Technology}
  \city{Guangzhou}
  \country{China}
}
\author{Zi-bo Qin}
\affiliation{%
  \institution{School of Computer Science and Engineering, South China University of Technology}
  \city{Guangzhou}
  \country{China}
}
\author{Wei-neng Chen}
\authornote{Corresponding author.}
\affiliation{%
  \institution{School of Computer Science and Engineering, South China University of Technology}
  \city{Guangzhou}
  \country{China}
}
\email{cschenwn@scut.edu.cn}

\renewcommand{\shortauthors}{Zhang et al.}

\setcopyright{none}                
\settopmatter{printacmref=false}  
\renewcommand\footnotetextcopyrightpermission[1]{} 

\begin{abstract}
Large language models (LLMs) increasingly fuse heterogeneous inputs in ubiquitous systems. Yet, how LLMs implicitly allocate authority when sensor measurements and user claims conflict remains unexamined, raising critical reliability concerns for deployments where physical sensing must retain priority. Unlike explicit traditional fusion, LLMs bury authority allocation within learned representations. We discover this allocation is severely format-dependent: numerical sensor data fails to integrate into answer-relevant model directions, allowing natural-language claims to dominate the final decision, a phenomenon we term \textbf{Authority Inversion}.To diagnose and mitigate this, we develop a geometric framework of context integration, introduce two computable audit metrics, specifically the Context Integration Ratio (CIR) and Authority Alignment Index (AAI), and propose Geometric Authority Calibration (GAC), an inference-time layer-level intervention to suppress misplaced user authority. Evaluating four models (4B to 35B parameters, three architectures) across four datasets totaling 576 conflict instances reveals extreme inversion: on numerical tasks, models exhibit near-zero sensor trust (AAI = -0.805, Cohen's d = -2.14), unaffected by model capacity. Validating our geometric framework, theory-guided causal injection flips 80.2\% of incorrect decisions (vs. <0.4\% for random controls). Practically, GAC improves HAR accuracy from 0---1.6\% to 21.9---27.5\%, outperforming prompting baselines. Ultimately, authority allocation in LLM-mediated systems must be explicitly audited and application-specifically configured rather than left implicit.
\end{abstract}
\begin{CCSXML}
<ccs2012>
   <concept>
       <concept_id>10003120.10003138</concept_id>
       <concept_desc>Human-centered computing~Ubiquitous and mobile computing</concept_desc>
       <concept_significance>500</concept_significance>
       </concept>
   <concept>
       <concept_id>10010147.10010178.10010179</concept_id>
       <concept_desc>Computing methodologies~Natural language processing</concept_desc>
       <concept_significance>500</concept_significance>
       </concept>
   <concept>
       <concept_id>10010147.10010257</concept_id>
       <concept_desc>Computing methodologies~Machine learning</concept_desc>
       <concept_significance>300</concept_significance>
       </concept>
   <concept>
       <concept_id>10003120.10003138.10003141</concept_id>
       <concept_desc>Human-centered computing~Ubiquitous and mobile devices</concept_desc>
       <concept_significance>100</concept_significance>
       </concept>
 </ccs2012>
\end{CCSXML}

\ccsdesc[500]{Human-centered computing~Ubiquitous and mobile computing}
\ccsdesc[500]{Computing methodologies~Natural language processing}
\ccsdesc[300]{Computing methodologies~Machine learning}
\ccsdesc[100]{Human-centered computing~Ubiquitous and mobile devices}

\keywords{authority inversion, ubiquitous computing, large language models, sensor--user conflict, mechanistic interpretability, context-aware systems}


\maketitle

\section{Introduction}
Consider a common ubiquitous computing scenario: a smartwatch continuously records accelerometer and gyroscope signals whose motion patterns strongly indicate that the user is walking, yet the user reports, ``I'm sitting at my desk working.'' In an LLM-mediated context inference pipeline, these two sources of evidence may be presented together to support downstream reasoning. When they disagree, which source does the model effectively trust? Our experiments reveal a striking pattern: across multiple model families and scales, the model often follows the user's natural-language claim and discounts the sensor evidence, even when the sensor measurement is the intended higher-authority signal in the system.

This question matters because ubiquitous systems increasingly rely on large language models (LLMs) to mediate heterogeneous information sources~\cite{abowd1999context}, including numerical sensor streams, event logs, health indicators, and user self-reports~\cite{xu2024penetrative,ouyang2024llmsense,kim2024health}. A defining characteristic of ubiquitous computing is precisely this continuous fusion of multi-source contextual evidence. In traditional context-aware systems, conflicts among sources are typically handled through explicit mechanisms such as confidence estimation, probabilistic fusion, or trust modeling~\cite{dey2001conceptual}. Once LLMs are inserted into the reasoning loop, however, authority allocation is no longer explicitly engineered; instead, it becomes an implicit property of the model's learned representations and alignment behavior. As a result, a system may appear to combine sensor and user inputs while, in practice, assigning them highly uneven and potentially application-inappropriate influence.

In this paper, we identify this behavior as a systematic failure mode that we call \textbf{Authority Inversion}: under sensor--user conflict, the model gives greater effective decision weight to the user's claim than to the sensor-supported signal, causing physically grounded evidence to lose the influence required to determine the final answer. Importantly, this is not a rare edge case. Across four models and four datasets spanning activity recognition, smart-home sensing, and health assessment, we find that authority inversion is severe and highly systematic when sensor evidence is expressed in high-dimensional numerical form. On activity-recognition tasks, sensor trust rates are near zero for all tested models (88.8\%--100\% user trust), and scaling from 4B to 35B parameters does not eliminate the problem. These results suggest that the failure is not merely a matter of insufficient model capacity, but a deeper incompatibility between sensor representations and the model's decision process.

Authority inversion is related to the sycophancy phenomenon studied in NLP~\cite{perez2023discovering}, but it is not reducible to the pure-text version of that problem. In standard NLP settings, conflicting sources are often both expressed in natural language. In ubiquitous systems, by contrast, the conflict is fundamentally cross-format: sensor evidence is frequently numerical, high-dimensional, and weakly aligned with the token embedding space, whereas user claims arrive as fluent natural language that is already well matched to the model's pretraining regime. This representation asymmetry makes the problem both more severe and more operationally consequential than ordinary user-opinion agreement. The issue is therefore not only that LLMs may be overly agreeable, but that they may systematically mis-rank heterogeneous evidence in ways that violate the intended authority structure of context-aware applications.

To study this problem, we combine controlled conflict benchmarks, diagnostic metrics, representation-level analysis, and inference-time mitigation. We construct conflict settings spanning activity recognition, smart-home sensing, and health assessment, covering a gradient from raw numerical sensor features to semantically interpretable indicators. Across these settings, we observe a consistent pattern: the less interpretable the sensor format is to the LLM, the more likely the model is to defer to the user. To diagnose this behavior, we introduce two computable metrics---the Context Integration Ratio (CIR) and the Authority Alignment Index (AAI)---that quantify how efficiently a context source enters the model's decision channel and how authority is allocated under conflict. To explain why this occurs, we develop a geometric account of context integration in the residual stream based on hyperspherical directional analysis and predictive--null-space decomposition. Building on this analysis, we further propose Geometric Authority Calibration (GAC), an inference-time mitigation that improves robustness when user claims conflict with sensor evidence.

Our central argument is that authority allocation in LLM-mediated ubiquitous systems should not be treated as a benign byproduct of prompting. It is an implicit system behavior that can silently violate application requirements when users and sensors disagree. In some applications, user self-report should retain priority; in others---especially those relying on physical sensing for safety, health, or environmental awareness---sensor evidence is expected to maintain sufficient influence under conflict. Making this allocation implicit inside the LLM obscures whether the system's actual behavior matches the designer's intent. By identifying, explaining, and mitigating authority inversion, this paper argues for more explicit, auditable, and application-dependent mechanisms for handling disagreement in LLM-mediated ubiquitous systems.

The contributions of this paper are:
\begin{itemize}[leftmargin=*,nosep]
\item \textbf{A failure mode in LLM-mediated ubiquitous systems.} We identify authority inversion---a systematic behavior in which LLMs trust user claims over conflicting sensor evidence---and show that it can silently violate the intended authority structure of context-aware applications.
\item \textbf{Cross-domain empirical evidence.} Across four models (4B--35B) and four datasets spanning activity recognition, smart-home sensing, and health assessment, we show that authority inversion is severe for numerical sensor formats (sensor trust $\approx 0\%$, AAI $= -0.805$, Cohen's $d = -2.14$), weakens as sensor representations become more semantically interpretable, and is not resolved by scaling from 4B to 35B in our tested models.
\item \textbf{A diagnosis framework for implicit authority allocation.} We introduce two computable metrics---Context Integration Ratio (CIR) and Authority Alignment Index (AAI)---that quantify how effectively different context sources enter the model's decision channel, enabling pre-deployment auditing of authority allocation.
\item \textbf{A mechanistic explanation.} We develop a geometric account of context integration based on hyperspherical directional analysis, showing that numerical sensor perturbations project almost entirely into the representational null space (CIR$_s \approx 0.03$--$0.08$). Theory-guided causal injection flips incorrect decisions at an 80.2\% rate vs.\ $<0.4\%$ for random controls, validating the explanation.
\item \textbf{An inference-time mitigation.} We propose Geometric Authority Calibration (GAC), an inference-time layer-level intervention that improves HAR accuracy from 0--1.6\% to 21.9--27.5\%, outperforming chain-of-thought prompting and semantic translation baselines.
\end{itemize}
\section{Related Work}
\subsection{LLMs in Ubiquitous Computing}

The application of LLMs in ubiquitous computing is expanding rapidly. A series of recent studies have explored using LLMs for activity recognition~\cite{ji2024hargpt} and health monitoring~\cite{kim2024health}. These LLM-centric works build upon earlier traditional ubiquitous computing research on smart home reasoning~\cite{rashidi2009keeping} and sensor-based context-aware computing~\cite{chen2021deep}. These works share a common pattern: converting sensor data---typically numerical time series---into textualized prompts for LLM reasoning or classification. For example, Xu et al.~\cite{xu2024penetrative} embed physical sensor signals in text form within prompts, leveraging LLM reasoning capabilities for environmental perception; Ji et al.~\cite{ji2024hargpt} explore LLMs' potential for zero-shot activity recognition.

While recent 2025 architectures like SensorLM~\cite{sensorlm2025} and universal representation models~\cite{wei2025onehar} have attempted to bridge this gap by training dedicated vision-transformer-based encoders or utilizing LLM-assisted cross-dataset representations for continuous wearable data, these systems primarily examine LLM reasoning under single-source conditions, with less attention to behavior under multi-source information conflict. As ubiquitous computing evolves from simple activity classification toward open-ended sensemaking using long-term passive sensing, as exemplified by recent frameworks such as GLOSS~\cite{choube2025gloss} and human-in-the-loop sensemaking systems~\cite{li2025vitalinsight}, handling multi-source reasoning becomes a foundational necessity. Yet in real-world ubiquitous computing deployments, inconsistency between sensor data and user self-reports is the norm rather than the exception---users may provide information inconsistent with sensor records due to memory bias, social desirability, or privacy concerns~\cite{althubaiti2016information}. Indeed, the tension between accurate system perception and user privacy is a well-recognized challenge across intelligent and distributed computing paradigms~\cite{zhao2023privacy}. This paper focuses on LLM decision behavior under such conflict scenarios.

\subsection{Self-Report vs.\ Sensed Data Disagreement}

Disagreement between user self-reports and sensor measurements is a long-standing concern in ubiquitous computing. In health and wellness monitoring, studies have documented systematic discrepancies between accelerometer-measured physical activity and self-reported exercise~\cite{althubaiti2016information}, and between sleep-tracker data and subjective sleep quality ratings~\cite{landry2015measuring}. In ecological momentary assessment, participants' real-time reports frequently diverge~\cite{shiffman2008ecological} from passively sensed behavioral indicators. Prior work has primarily focused on \emph{detecting} such discrepancies and reconciling them through statistical models or clinician review.

Our work addresses a different question: when an LLM is tasked with reasoning over both sources simultaneously, how does it \emph{implicitly resolve} the disagreement? We show that the resolution is not neutral---it is systematically biased by input format, in ways that are invisible to the system designer without internal representation analysis.

\subsection{Information Conflict and Credibility Assessment}

Handling information conflict is a classic problem in context-aware computing. Traditional approaches typically employ explicit credibility modeling: Dempster-Shafer evidence theory~\cite{shafer1976mathematical}, Bayesian fusion frameworks~\cite{khaleghi2013multisensor}, or trust-based weighting schemes~\cite{josang2007survey} all provide structured methods for making judgments when multi-source information is inconsistent. A common feature of these methods is that information source weights can be explicitly set by engineers or learned through data-driven approaches.

When LLMs serve as reasoning engines, weight allocation shifts from explicit design to implicit emergence. How models form processing preferences for different information formats during pre-training, and how alignment training affects these preferences, remain understudied in the ubiquitous computing context. This paper aims to fill this gap.
\subsection{LLM Sycophancy and Faithfulness}
In the NLP literature, researchers have noted that LLMs exhibit a sycophancy tendency---when users express certain opinions or preferences, models tend to agree rather than maintain objectively correct answers~\cite{perez2023discovering,wei2023simple}. Perez et al.~\cite{perez2023discovering} found that RLHF-aligned models are more susceptible to user opinion influence than base models; Wei et al.~\cite{wei2023simple} further linked this phenomenon to instruction-following training. Sharma et al.~\cite{sharma2023sycophancy} provided a systematic study showing that human preference alignment causes models to systematically abandon objective facts when facing conflicts with user opinions. Zhang and Chen~\cite{zhang2026humanlike} recently unified these behaviors through ``signal competition dynamics,'' showing that human-like social compliance creates a dominant subspace that easily overrides weaker factual signals---a finding directly relevant to the ``user subjective vs.\ sensor objective'' tension in our work.

Recent 2025 mechanistic studies~\cite{truth_overridden_2025} reveal that LLMs systematically discard factual representations in early attention layers when encountering user disagreement, exposing the internal origins of sycophancy. However, a key distinction exists between ubiquitous computing and pure NLP scenarios: in NLP, conflicting information sources are typically both natural language text, homogeneous in format; in ubiquitous computing, sensor data and user claims differ fundamentally in representation format---the former is typically numerical, high-dimensional, and lacking direct semantics, while the latter is structured natural language. Our results demonstrate that this format heterogeneity makes authority inversion in ubiquitous computing far more severe and systematic than sycophancy in pure-text settings.
\subsection{Mechanistic Interpretability of LLMs}
Mechanistic interpretability is an important path toward understanding LLM internals~\cite{elhage2021mathematical,nanda2023progress}. Elhage et al.~\cite{elhage2021mathematical} found that the Transformer residual stream can be understood as a superposition of multiple feature directions; Nanda et al.~\cite{nanda2023progress} revealed internal representation structures used by models in specific tasks.

The most relevant theoretical foundation for this paper is the analysis of LayerNorm and the logit lens~\cite{belrose2023eliciting} (a technique mapping hidden states directly to the vocabulary space), which showed that the RMSNorm-processed residual stream can be viewed as a directional selection problem on the unit hypersphere, with the unembedding matrix's row vectors defining decision boundaries on this hypersphere. We develop a complete orthogonal decomposition theory on this basis and apply it to multi-source information conflict analysis.

In the representation intervention literature, Zou et al.~\cite{zou2023representation} proposed Representation Engineering, demonstrating the possibility of controlling high-level model behaviors (e.g., honesty, safety) by manipulating internal representations. Meng et al.~\cite{meng2022locating} showed through the ROME method that locating and editing specific directions in the residual stream can alter factual outputs. Our causal injection experiments are methodologically aligned with these works but focus on a previously unexplored scenario: authority allocation in multi-source heterogeneous information conflicts.
\subsection{LLM Limitations on Numerical Data}
Recent work has revealed fundamental limitations of LLMs in processing numerical and time-series data. Gruver et al.~\cite{gruver2023llm_timeseries} found that LLM tokenizers split continuous numerical values into discrete token fragments, causing models to lose continuous perception of numerical magnitude relationships. Jin et al.~\cite{jin2024time} further showed in Time-LLM that directly feeding numerical time-series data into LLM tokenizers leads to severe semantic loss, necessitating specialized reprogramming techniques to bridge the modality gap. Related mechanistic analyses by Zhang et al.~\cite{zhang2026geometric} suggest that discrete logical and numerical representations may incur an additional geometric processing cost, increasing manifold entanglement in the residual stream. These findings directly support our core argument: the numerical semantics of IMU sensor features (e.g., acceleration mean of 0.2801) may already be destroyed at the encoding stage after tokenization. The format incompatibility we identify in this paper can thus be understood as a downstream consequence of this fundamental tokenization limitation, amplified by the multi-source conflict setting.
\section{Problem Formulation}
\subsection{Tripartite Information Structure in Ubiquitous Computing}
The basic scenario considered in this paper can be formalized as a tripartite information structure. Let $q$ be a question requiring context-based judgment (e.g., ``What activity is the user currently performing?''), and $\mathcal{A} = \{a_1, a_2, \ldots, a_K\}$ be the candidate answer set ($K=4$ for forced four-choice format). Three information sources surround $q$:
\begin{itemize}[leftmargin=*,nosep]
\item \textbf{Sensor context} $c_s$: objective measurement data from physical sensors, supporting answer $a_s \in \mathcal{A}$.
\item \textbf{User claim context} $c_u$: the user's subjective self-report, supporting answer $a_u \in \mathcal{A}$, where $a_u \neq a_s$ (conflict condition).
\item \textbf{Parametric knowledge}: prior knowledge accumulated during pre-training, supporting answer $a_0 \in \mathcal{A}$ in the absence of external context.
\end{itemize}
Under the controlled benchmark setting, $a_s$ is the correct answer determined by the underlying labeled sensor signal. In this setting, the intended authority structure is that $c_s$ should retain priority when $c_s$ and $c_u$ conflict.
\subsection{Definition of Authority Inversion}
\begin{definition}[Authority Inversion]
Under the joint condition where both sensor context $c_s$ and user claim $c_u$ are provided, if an LLM systematically selects the user-claimed answer $a_u$ over the sensor-supported correct answer $a_s$, the model is said to exhibit authority inversion in this scenario.
\end{definition}

\section{Theoretical Framework}
This section indicates the mathematical framework for analyzing authority inversion. We proceed from the Transformer's output mechanism, derive the decision geometry on the hypersphere, and define the core metrics for quantifying authority inversion.
\subsection{Notation and Preliminaries}
Let $d$ denote the model's hidden dimension and $V$ the vocabulary size. We focus on the model's behavior at the answer-prediction position (the first token position after ``Answer:'' in the prompt). Let $\mathbf{h} \in \mathbb{R}^d$ denote the residual stream output of the last Transformer layer at this position.

Modern LLMs apply RMSNorm before mapping $\mathbf{h}$ to logits~\cite{zhang2019root}:
\[
\text{RMSNorm}(\mathbf{h}) = \frac{\mathbf{h}}{\|\mathbf{h}\|/\sqrt{d}} \odot \boldsymbol{\gamma}
\]
where $\boldsymbol{\gamma} \in \mathbb{R}^d$ is a learnable scaling parameter. Let $\mathbf{W} \in \mathbb{R}^{V \times d}$ be the unembedding matrix with row $\mathbf{w}_k^\top$ and bias $b_k$. The logit for token $k$ is $z_k = \mathbf{w}_k^\top \text{RMSNorm}(\mathbf{h}) + b_k$.
\subsection{Decision Geometry on the Hypersphere}
\begin{proposition}[Effective Unembedding and Directional Determinism]
Define the effective unembedding vector $\bar{\mathbf{w}}_k = \mathbf{w}_k \odot \boldsymbol{\gamma}$. Then the logit for token $k$ can be written as:
\begin{equation}
z_k = \sqrt{d}\, \bar{\mathbf{w}}_k^\top \hat{\mathbf{u}} + b_k \label{eq:logit}
\end{equation}
where $\hat{\mathbf{u}} = \mathbf{h} / \|\mathbf{h}\| \in \mathcal{S}^{d-1}$ is the directional vector of the residual stream.
\end{proposition}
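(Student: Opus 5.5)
The plan is a direct substitution into the definition of the logit, followed by an elementwise-product identity. I will start from $z_k = \mathbf{w}_k^\top \text{RMSNorm}(\mathbf{h}) + b_k$ and insert the RMSNorm formula. Pulling out the scalar factor gives
\[
\text{RMSNorm}(\mathbf{h}) = \sqrt{d}\,\frac{\mathbf{h}}{\|\mathbf{h}\|}\odot\boldsymbol{\gamma} = \sqrt{d}\,(\hat{\mathbf{u}}\odot\boldsymbol{\gamma}),
\]
using $\hat{\mathbf{u}} = \mathbf{h}/\|\mathbf{h}\|$. Because the Hadamard product is bilinear, scalar multiples commute with $\odot$, so this step is legitimate.

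The second step moves $\boldsymbol{\gamma}$ from the hidden state onto the unembedding row. I will use the elementary identity $\mathbf{a}^\top(\mathbf{b}\odot\mathbf{c}) = \sum_i a_i b_i c_i = (\mathbf{a}\odot\mathbf{c})^\top \mathbf{b}$. Taking $\mathbf{a}=\mathbf{w}_k$, $\mathbf{b}=\hat{\mathbf{u}}$ and $\mathbf{c}=\boldsymbol{\gamma}$ gives
\[
\mathbf{w}_k^\top\text{RMSNorm}(\mathbf{h}) = \sqrt{d}\,(\mathbf{w}_k\odot\boldsymbol{\gamma})^\top\hat{\mathbf{u}} = \sqrt{d}\,\bar{\mathbf{w}}_k^\top\hat{\mathbf{u}}.
\]
Adding $b_k$ then yields \eqref{eq:logit}.

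The ``directional determinism'' reading follows as a remark. Once the weights $\mathbf{W}$, $\boldsymbol{\gamma}$ and $\mathbf{b}$ are fixed, every logit depends on $\mathbf{h}$ only through $\hat{\mathbf{u}}\in\mathcal{S}^{d-1}$. Rescaling $\mathbf{h}$ by any $\lambda>0$ leaves all $z_k$, and hence the argmax decision and the softmax, unchanged. The answer choice is therefore a function of direction on the unit hypersphere alone.

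The only real obstacle is the degenerate case $\mathbf{h}=\mathbf{0}$, where $\hat{\mathbf{u}}$ is undefined. Practical RMSNorm adds an $\epsilon$ inside the root, but the paper's formula omits it. I would therefore state the assumption $\mathbf{h}\neq\mathbf{0}$ explicitly, equivalently taking $\epsilon=0$, and note that with $\epsilon>0$ the identity holds up to a factor $\|\mathbf{h}\|/\sqrt{\|\mathbf{h}\|^2+d\epsilon}$, which is close to $1$ whenever $\|\mathbf{h}\|^2\gg d\epsilon$.
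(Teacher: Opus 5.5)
Your proposal is correct and follows the paper's argument: the paper's one-line coordinate computation $z_k=\sum_i w_{ki}\,\frac{\sqrt{d}\,h_i}{\|\mathbf{h}\|}\,\gamma_i+b_k$ is exactly your Hadamard identity $\mathbf{a}^\top(\mathbf{b}\odot\mathbf{c})=(\mathbf{a}\odot\mathbf{c})^\top\mathbf{b}$ written out in components. Your two caveats are accurate refinements that the paper leaves implicit: the assumption $\mathbf{h}\neq\mathbf{0}$, and the omitted RMSNorm $\epsilon$, which rescales only the non-bias term by a common positive factor.
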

The derivation is straightforward: $z_k = \sum_i w_{ki} \cdot \frac{\sqrt{d}\, h_i}{\|\mathbf{h}\|} \cdot \gamma_i + b_k = \sqrt{d}\, \bar{\mathbf{w}}_k^\top \hat{\mathbf{u}} + b_k$.

An important corollary is that the norm $\|\mathbf{h}\|$ does not affect answer selection, since $\arg\max_k z_k$ depends only on the direction $\hat{\mathbf{u}}$. The model's decision process is thus essentially a directional selection problem on the unit hypersphere $\mathcal{S}^{d-1}$, a property recently formalized by Zhang and Lin~\cite{zhang2026simulated}, who demonstrated that contextual conflicts are resolved through orthogonal directional shifts rather than magnitude changes. Hereafter, all $\mathbf{w}_k$ refer to the effective unembedding vectors $\bar{\mathbf{w}}_k$.

For candidate answers $a_i$ and $a_j$, the \emph{pairwise decision margin} is:
\begin{equation}
m_{ij} = z_{a_i} - z_{a_j} = \sqrt{d}\, \boldsymbol{\Delta}_{ij}^\top \hat{\mathbf{u}} + \beta_{ij} \label{eq:margin}
\end{equation}
where $\boldsymbol{\Delta}_{ij} = \mathbf{w}_{a_i} - \mathbf{w}_{a_j}$ and $\beta_{ij} = b_{a_i} - b_{a_j}$. The model selects $a_i$ if and only if $m_{ij} > 0$ for all $j \neq i$.
\subsection{First-Order Approximation of Context Perturbation}

When context is added to the prompt, the residual stream changes from baseline $\mathbf{h}_0$ to $\mathbf{h}_c = \mathbf{h}_0 + \boldsymbol{\delta}_c$. We analyze how $\boldsymbol{\delta}_c$ affects the direction $\hat{\mathbf{u}}$.
\begin{lemma}[First-Order Directional Perturbation]\label{lem:first_order}

  Let $\hat{\mathbf{u}}_0 = \mathbf{h}_0 / \|\mathbf{h}_0\|$, $r_0 = \|\mathbf{h}_0\|$, and $\epsilon = \|\boldsymbol{\delta}_c\| / r_0$. Then:
\begin{equation}
\hat{\mathbf{u}}_c = \hat{\mathbf{u}}_0 + \frac{1}{r_0} \mathbf{P}_\perp \boldsymbol{\delta}_c + O(\epsilon^2) \label{eq:first_order}
\end{equation}
where $\mathbf{P}_\perp = \mathbf{I} - \hat{\mathbf{u}}_0 \hat{\mathbf{u}}_0^\top$ is the orthogonal projector onto the tangent space $T_{\hat{\mathbf{u}}_0}\mathcal{S}^{d-1}$.
\end{lemma}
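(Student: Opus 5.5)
The plan is to Taylor-expand the normalization map $N(\mathbf{h}) = \mathbf{h}/\|\mathbf{h}\|$ around $\mathbf{h}_0$. First compute its Jacobian. Differentiating $\|\mathbf{h}\|^{-1}\mathbf{h}$ gives
\[
DN(\mathbf{h}) = \frac{1}{\|\mathbf{h}\|}\left(\mathbf{I} - \frac{\mathbf{h}\mathbf{h}^\top}{\|\mathbf{h}\|^2}\right),
\]
which at $\mathbf{h}_0$ equals $\frac{1}{r_0}\mathbf{P}_\perp$. The linear term of the expansion is therefore $\frac{1}{r_0}\mathbf{P}_\perp\boldsymbol{\delta}_c$, which matches \eqref{eq:first_order}. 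It remains to control the remainder.

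A clean way to get an explicit second-order bound is to work directly rather than through abstract Taylor theory. Write $\boldsymbol{\delta}_c = r_0(\alpha\hat{\mathbf{u}}_0 + \mathbf{v})$ with $\alpha = \hat{\mathbf{u}}_0^\top\boldsymbol{\delta}_c/r_0$ and $\mathbf{v} = \mathbf{P}_\perp\boldsymbol{\delta}_c/r_0$. Then $|\alpha| \le \epsilon$ and $\|\mathbf{v}\| \le \epsilon$. This gives $\mathbf{h}_c = r_0\big((1+\alpha)\hat{\mathbf{u}}_0 + \mathbf{v}\big)$ and
\[
\|\mathbf{h}_c\| = r_0\sqrt{(1+\alpha)^2 + \|\mathbf{v}\|^2} = r_0(1+\alpha)\big(1 + O(\epsilon^2)\big),
\]
valid for $\epsilon < 1$. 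Dividing,
\[
\hat{\mathbf{u}}_c = \frac{(1+\alpha)\hat{\mathbf{u}}_0 + \mathbf{v}}{(1+\alpha)(1+O(\epsilon^2))} = \hat{\mathbf{u}}_0 + \frac{\mathbf{v}}{1+\alpha} + O(\epsilon^2).
\]
Finally, $\frac{\mathbf{v}}{1+\alpha} = \mathbf{v} - \frac{\alpha}{1+\alpha}\mathbf{v}$, and the second term has norm at most $\epsilon^2/(1-\epsilon)$. This yields $\hat{\mathbf{u}}_c = \hat{\mathbf{u}}_0 + \frac{1}{r_0}\mathbf{P}_\perp\boldsymbol{\delta}_c + O(\epsilon^2)$. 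Note that the radial component of $\boldsymbol{\delta}_c$ cancels exactly at first order, which is the geometric content of the lemma.

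The main obstacle is making the $O(\epsilon^2)$ uniform, meaning the constant should not depend on $\mathbf{h}_0$ or on the direction of $\boldsymbol{\delta}_c$. I would restrict to $\epsilon \le 1/2$ and bound each remainder explicitly: $|\sqrt{1+x}-1| \le x/2$ for the norm factor, and $1/(1-\epsilon) \le 2$. Together these give a concrete constant, for instance $\|\hat{\mathbf{u}}_c - \hat{\mathbf{u}}_0 - r_0^{-1}\mathbf{P}_\perp\boldsymbol{\delta}_c\| \le C\epsilon^2$ with $C$ an absolute constant of order $3$. Combined with Proposition~1, this shows that to first order only the tangential component $\mathbf{P}_\perp\boldsymbol{\delta}_c$ can move the logits.
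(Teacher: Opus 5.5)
Your proof is correct and follows essentially the same route as the paper's appendix: set $\alpha = \hat{\mathbf{u}}_0^\top\boldsymbol{\delta}_c/r_0$, write $\|\mathbf{h}_c\| = r_0(1+\alpha)\bigl(1+O(\epsilon^2)\bigr)$, divide, and see the radial term cancel so that only $\frac{1}{r_0}\mathbf{P}_\perp\boldsymbol{\delta}_c$ remains. You add two small refinements to what the paper leaves implicit: you split $\boldsymbol{\delta}_c$ into radial and tangential parts before dividing, so the cancellation is exact rather than obtained through $1/(1+\alpha)\approx 1-\alpha$, and you give an explicit uniform constant for $\epsilon\le 1/2$.
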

\begin{proof}
Taylor-expand $\hat{\mathbf{u}}_c = (\mathbf{h}_0 + \boldsymbol{\delta}_c)/\|\mathbf{h}_0 + \boldsymbol{\delta}_c\|$ to first order in $\epsilon = \|\boldsymbol{\delta}_c\|/r_0$; the radial component cancels and only the tangential projection $\mathbf{P}_\perp\boldsymbol{\delta}_c$ survives. Full derivation in Appendix~\ref{app:proofs}.
\end{proof}

Lemma~\ref{lem:first_order} shows that only the \emph{tangential component} $\boldsymbol{\delta}_c^\perp = \mathbf{P}_\perp\boldsymbol{\delta}_c$ can change the direction on the hypersphere. The component along $\hat{\mathbf{u}}_0$ only changes the norm and has no effect on answer selection. The logit change induced by context $c$ is thus:
\begin{equation}
\Delta z_k^{(c)} = \frac{\sqrt{d}}{r_0}\, \tilde{\mathbf{w}}_k^\top \boldsymbol{\delta}_c^\perp + O(\epsilon^2) \label{eq:logit_change}
\end{equation}
where $\tilde{\mathbf{w}}_k = \mathbf{P}_\perp \mathbf{w}_k$ is the projection of $\mathbf{w}_k$ into the tangent space.
\subsection{Answer Subspace and Predictive--Null-Space Decomposition}

Equation~\eqref{eq:logit_change} shows that logit changes are determined by the projection of the tangential perturbation onto the tangential unembedding vectors. A natural question arises: which components of $\boldsymbol{\delta}_c^\perp$ affect answer selection?
\begin{definition}[Answer Subspace]

The answer subspace is defined as:
\[
\mathcal{A} = \mathrm{span}(\tilde{\mathbf{w}}_{a_1}, \tilde{\mathbf{w}}_{a_2}, \ldots, \tilde{\mathbf{w}}_{a_K}) \subseteq T_{\hat{\mathbf{u}}_0}\mathcal{S}^{d-1}
\]

Its orthonormal basis is obtained via QR decomposition of $\tilde{\mathbf{W}} = [\tilde{\mathbf{w}}_{a_1}\; \cdots\; \tilde{\mathbf{w}}_{a_K}] = \mathbf{Q}\mathbf{R}$, with projector $\boldsymbol{\Pi}_{\mathcal{A}} = \mathbf{Q}\mathbf{Q}^\top$.
\end{definition}
\begin{theorem}[Predictive--Null-Space Orthogonal Decomposition]\label{thm:decomp}

  For any tangential perturbation $\boldsymbol{\delta}_c^\perp$, define:
\[
\boldsymbol{\delta}_r = \boldsymbol{\Pi}_{\mathcal{A}}\, \boldsymbol{\delta}_c^\perp \;\;(\textit{predictive component}), \qquad \boldsymbol{\delta}_\bot = (\mathbf{I} - \boldsymbol{\Pi}_{\mathcal{A}})\, \boldsymbol{\delta}_c^\perp \;\;(\textit{null-space component})
\]

Then the following four properties hold:
\noindent\textbf{(i) Completeness:} $\boldsymbol{\delta}_c^\perp = \boldsymbol{\delta}_r + \boldsymbol{\delta}_\bot$.

\noindent\textbf{(ii) Orthogonality:} $\boldsymbol{\delta}_r^\top\boldsymbol{\delta}_\bot = 0$.

\noindent\textbf{(iii) Norm preservation} (Parseval identity for orthogonal decomposition): $\|\boldsymbol{\delta}_c^\perp\|^2 = \|\boldsymbol{\delta}_r\|^2 + \|\boldsymbol{\delta}_\bot\|^2$.

\noindent\textbf{(iv) Null-space irrelevance:} For any answer token $a_k \in \mathcal{T}$,

\[
\Delta z_{a_k}^{(c)} = \frac{\sqrt{d}}{r_0} \tilde{\mathbf{w}}_{a_k}^\top \boldsymbol{\delta}_r
\]

That is, the null-space component $\boldsymbol{\delta}_\bot$ has zero influence on all answer logits.
\end{theorem}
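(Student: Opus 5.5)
The plan is to derive all four properties from the single fact that $\boldsymbol{\Pi}_{\mathcal{A}} = \mathbf{Q}\mathbf{Q}^\top$ is an orthogonal projector, i.e.\ $\boldsymbol{\Pi}_{\mathcal{A}}^\top = \boldsymbol{\Pi}_{\mathcal{A}}$ and $\boldsymbol{\Pi}_{\mathcal{A}}^2 = \boldsymbol{\Pi}_{\mathcal{A}}$. Both identities follow from $\mathbf{Q}^\top\mathbf{Q} = \mathbf{I}$, where the orthonormal columns of $\mathbf{Q}$ come from the QR factorization of $\tilde{\mathbf{W}}$. If $\tilde{\mathbf{W}}$ is rank-deficient, we take $\mathbf{Q}$ to be the thin factor restricted to an orthonormal basis of the column space, so that $\mathbf{Q}^\top\mathbf{Q}=\mathbf{I}$ still holds. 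The complement $\mathbf{I}-\boldsymbol{\Pi}_{\mathcal{A}}$ is then also a symmetric idempotent, and $\boldsymbol{\Pi}_{\mathcal{A}}(\mathbf{I}-\boldsymbol{\Pi}_{\mathcal{A}}) = \mathbf{0}$.

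\textbf{Completeness (i).} This is immediate, since $\boldsymbol{\delta}_r + \boldsymbol{\delta}_\bot = \boldsymbol{\Pi}_{\mathcal{A}}\boldsymbol{\delta}_c^\perp + (\mathbf{I}-\boldsymbol{\Pi}_{\mathcal{A}})\boldsymbol{\delta}_c^\perp$.

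\textbf{Orthogonality (ii).} Write $\boldsymbol{\delta}_r^\top\boldsymbol{\delta}_\bot = (\boldsymbol{\delta}_c^\perp)^\top \boldsymbol{\Pi}_{\mathcal{A}}^\top(\mathbf{I}-\boldsymbol{\Pi}_{\mathcal{A}})\boldsymbol{\delta}_c^\perp$. Symmetry and idempotence reduce the middle factor to $\boldsymbol{\Pi}_{\mathcal{A}}-\boldsymbol{\Pi}_{\mathcal{A}}^2 = \mathbf{0}$.

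\textbf{Norm preservation (iii).} Expand $\|\boldsymbol{\delta}_r+\boldsymbol{\delta}_\bot\|^2$ using (i). The cross term vanishes by (ii), which gives the Parseval identity.

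\textbf{Null-space irrelevance (iv).} This is the only step that uses the previous results. By Eq.~\eqref{eq:logit_change}, which follows from Lemma~\ref{lem:first_order}, the first-order logit change is $\frac{\sqrt{d}}{r_0}\tilde{\mathbf{w}}_{a_k}^\top\boldsymbol{\delta}_c^\perp$. Substituting (i) splits this into a term involving $\boldsymbol{\delta}_r$ and a term $\tilde{\mathbf{w}}_{a_k}^\top\boldsymbol{\delta}_\bot$. Each $\tilde{\mathbf{w}}_{a_k}$ is a column of $\tilde{\mathbf{W}}$ and therefore lies in $\mathcal{A}$, so $\boldsymbol{\Pi}_{\mathcal{A}}\tilde{\mathbf{w}}_{a_k} = \tilde{\mathbf{w}}_{a_k}$. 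Hence
\[
\tilde{\mathbf{w}}_{a_k}^\top\boldsymbol{\delta}_\bot = (\boldsymbol{\Pi}_{\mathcal{A}}\tilde{\mathbf{w}}_{a_k})^\top(\mathbf{I}-\boldsymbol{\Pi}_{\mathcal{A}})\boldsymbol{\delta}_c^\perp = \tilde{\mathbf{w}}_{a_k}^\top \boldsymbol{\Pi}_{\mathcal{A}}(\mathbf{I}-\boldsymbol{\Pi}_{\mathcal{A}})\boldsymbol{\delta}_c^\perp = 0 .
\]
The index set $\mathcal{T}$ is read as the answer tokens $\{a_1,\dots,a_K\}$ that span $\mathcal{A}$.

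\textbf{Main point of care.} The statement of (iv) omits the $O(\epsilon^2)$ remainder. I would state explicitly that the identity is exact for the first-order (linearized) logit change and holds up to $O(\epsilon^2)$ for the true change. The same caveat applies to pairwise margins $m_{ij}$, since $\tilde{\mathbf{w}}_{a_i}-\tilde{\mathbf{w}}_{a_j}\in\mathcal{A}$ as well. Beyond this, and the rank-deficiency remark, I expect no real obstacle: the argument is pure linear algebra once the projector properties are recorded.
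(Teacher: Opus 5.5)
Your proposal is correct and follows essentially the same route as the paper. Properties (i)--(iii) come from the symmetry and idempotency of $\boldsymbol{\Pi}_{\mathcal{A}}$, and (iv) comes from applying Eq.~\eqref{eq:logit_change} together with the fact that each $\tilde{\mathbf{w}}_{a_k}\in\mathcal{A}$ is orthogonal to $\boldsymbol{\delta}_\bot$. Your added caveats are sound refinements rather than a different argument: (iv) holds exactly only for the linearized logit change, since the statement drops the $O(\epsilon^2)$ remainder, and you also handle a rank-deficient $\tilde{\mathbf{W}}$.
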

\begin{proof}
Properties (i)--(iii) follow from the symmetric idempotency of $\boldsymbol{\Pi}_{\mathcal{A}}$. Property (iv) holds because $\tilde{\mathbf{w}}_{a_k} \in \mathcal{A}$ and $\boldsymbol{\delta}_\bot \in \mathcal{A}^\perp$ are orthogonal. See Appendix~\ref{app:proofs} for details.
\end{proof}

The physical significance of Theorem~\ref{thm:decomp} is as follows: the answer subspace $\mathcal{A}$ has dimension $K' = \mathrm{rank}(\tilde{\mathbf{W}}) \leq K$ (at most 4 for our four-choice scenarios), while the tangent space has dimension $d - 1$ (2,047--4,095 in the models used in this paper). This means \emph{the vast majority of dimensions belong to the null-space subspace}. A context perturbation may have large total energy yet barely affect answer selection, if its energy concentrates in the null-space subspace. This is precisely the phenomenon we observe for sensor data.
\subsection{Context Integration Ratio (CIR)}
\begin{definition}[Context Integration Ratio]
The CIR of context $c$ is:
\begin{equation}
\mathrm{CIR}(c) = \frac{\|\boldsymbol{\delta}_r\|}{\|\boldsymbol{\delta}_c^\perp\|} = \frac{\|\boldsymbol{\Pi}_{\mathcal{A}}\, \boldsymbol{\delta}_c^\perp\|}{\|\boldsymbol{\delta}_c^\perp\|} \in [0, 1] \label{eq:cir}
\end{equation}
\end{definition}
CIR has a clear geometric meaning: it equals the cosine of the minimum principal angle between $\boldsymbol{\delta}_c^\perp$ and $\mathcal{A}$. CIR near 1 indicates that perturbation energy concentrates in the predictive subspace; CIR near 0 indicates energy disperses in the null-space subspace.

As a reference baseline, if $\boldsymbol{\delta}_c^\perp$ is uniformly random in the $(d{-}1)$-dimensional tangent space, then $\mathbb{E}[\mathrm{CIR}^2] = K'/(d{-}1)$. For $d = 4096$ and $K' = 4$, the random baseline is $\mathbb{E}[\mathrm{CIR}] \approx \sqrt{4/4095} \approx 0.031$.
\subsection{Authority Force and Authority Alignment Index (AAI)}
\begin{definition}[Correct Decision Direction]
Let $a_s$ be the sensor-supported correct answer and $a_u$ the user-claimed answer. Define:
$\mathbf{d}^* = \tilde{\mathbf{w}}_{a_s} - \tilde{\mathbf{w}}_{a_u} \in \mathcal{A}$.
\end{definition}
\begin{definition}[Authority Force]

The sensor authority force and user claim authority force are:
\begin{equation}
\mathcal{F}_s = \frac{\sqrt{d}}{r_0}\, \mathbf{d}^{*\top}\, \boldsymbol{\delta}_r^s, \qquad \mathcal{F}_u = -\frac{\sqrt{d}}{r_0}\, \mathbf{d}^{*\top}\, \boldsymbol{\delta}_r^u \label{eq:force}
\end{equation}
where $\boldsymbol{\delta}_r^s$ and $\boldsymbol{\delta}_r^u$ are the predictive components of sensor and user claim contexts, respectively. $\mathcal{F}_s > 0$ indicates the sensor pushes the model toward the correct answer $a_s$; $\mathcal{F}_u > 0$ indicates the user claim pushes the model toward $a_u$.
\end{definition}

\begin{definition}[Authority Alignment Index]
\begin{equation}
\mathrm{AAI} = \frac{\mathcal{F}_s - \mathcal{F}_u}{|\mathcal{F}_s| + |\mathcal{F}_u|} \in [-1, +1] \label{eq:aai}
\end{equation}
AAI $> 0$ indicates correct authority ranking; AAI $< 0$ indicates authority inversion; AAI $= 0$ indicates balance.
\end{definition}

\subsection{Theoretical Prediction of Joint Decisions}

When sensor and user contexts are presented jointly, the total perturbation $\boldsymbol{\delta}_{su} = \mathbf{h}_{su} - \mathbf{h}_0$ decomposes as:

$\boldsymbol{\delta}_{su} = \boldsymbol{\delta}_s + \boldsymbol{\delta}_u + \boldsymbol{\delta}_{\mathrm{inter}}$,

where $\boldsymbol{\delta}_{\mathrm{inter}}$ is the nonlinear interaction term. Define the interaction force $\mathcal{I}_{su} = \frac{\sqrt{d}}{r_0}\, \mathbf{d}^{*\top}\, \boldsymbol{\Pi}_{\mathcal{A}} \mathbf{P}_\perp \boldsymbol{\delta}_{\mathrm{inter}}$.

\begin{theorem}[Joint Decision Margin]\label{thm:margin}
Under the joint context condition, the decision margin for the correct answer $a_s$ relative to the user-claimed answer $a_u$ is:
\begin{equation}
m_{a_s a_u}^{(\mathrm{joint})} = m_{a_s a_u}^{(0)} + \mathcal{F}_s - \mathcal{F}_u + \mathcal{I}_{su} + O(\epsilon^2) \label{eq:joint_margin}
\end{equation}
where $m_{a_s a_u}^{(0)}$ is the baseline margin. The model selects the correct answer if and only if $m_{a_s a_u}^{(\mathrm{joint})} > 0$.
\end{theorem}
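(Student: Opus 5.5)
We derive the joint margin by applying Lemma~\ref{lem:first_order} to the joint perturbation and substituting into the margin formula~\eqref{eq:margin}. Write $\mathbf{h}_{su} = \mathbf{h}_0 + \boldsymbol{\delta}_{su}$ and set $\epsilon = \|\boldsymbol{\delta}_{su}\|/r_0$. Lemma~\ref{lem:first_order} gives
\[
\hat{\mathbf{u}}_{su} = \hat{\mathbf{u}}_0 + \tfrac{1}{r_0}\mathbf{P}_\perp\boldsymbol{\delta}_{su} + O(\epsilon^2).
\]
Substituting this into~\eqref{eq:margin} for the pair $(a_s,a_u)$ yields
\[
m^{(\mathrm{joint})}_{a_s a_u} = \sqrt{d}\,\boldsymbol{\Delta}_{su}^\top\hat{\mathbf{u}}_0 + \beta_{su} + \tfrac{\sqrt{d}}{r_0}\boldsymbol{\Delta}_{su}^\top\mathbf{P}_\perp\boldsymbol{\delta}_{su} + O(\epsilon^2).
\]
The first two terms are exactly $m^{(0)}_{a_s a_u}$. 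Since $\mathbf{P}_\perp$ is symmetric, $\boldsymbol{\Delta}_{su}^\top\mathbf{P}_\perp = (\mathbf{P}_\perp\boldsymbol{\Delta}_{su})^\top = (\tilde{\mathbf{w}}_{a_s}-\tilde{\mathbf{w}}_{a_u})^\top = \mathbf{d}^{*\top}$. So the first-order correction is $\frac{\sqrt{d}}{r_0}\mathbf{d}^{*\top}\mathbf{P}_\perp\boldsymbol{\delta}_{su}$.

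Next we insert the projector $\boldsymbol{\Pi}_{\mathcal{A}}$. Because $\mathbf{d}^*\in\mathcal{A}$, we have $\mathbf{d}^{*\top} = \mathbf{d}^{*\top}\boldsymbol{\Pi}_{\mathcal{A}}$. This is property (iv) of Theorem~\ref{thm:decomp} applied to the difference of two answer tokens. We then expand $\boldsymbol{\delta}_{su} = \boldsymbol{\delta}_s+\boldsymbol{\delta}_u+\boldsymbol{\delta}_{\mathrm{inter}}$ using linearity of $\boldsymbol{\Pi}_{\mathcal{A}}\mathbf{P}_\perp$, which splits the correction into three terms:
\begin{itemize}
\item the sensor term, $\frac{\sqrt{d}}{r_0}\mathbf{d}^{*\top}\boldsymbol{\delta}_r^s = \mathcal{F}_s$ by~\eqref{eq:force};
\item the user term, $\frac{\sqrt{d}}{r_0}\mathbf{d}^{*\top}\boldsymbol{\delta}_r^u = -\mathcal{F}_u$, where the sign convention in~\eqref{eq:force} produces the minus;
\item the interaction term, which is $\mathcal{I}_{su}$ by definition.
\end{itemize}
The sum gives~\eqref{eq:joint_margin}.

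The main obstacle is the remainder bookkeeping. The single-context forces are defined through separate perturbations $\boldsymbol{\delta}_s,\boldsymbol{\delta}_u$, all measured relative to the same base point $\hat{\mathbf{u}}_0$ and the same $r_0$. So every expansion must be done at $\mathbf{h}_0$, and the $O(\epsilon^2)$ term must be controlled by the joint $\epsilon$. The joint $\epsilon$ satisfies
\[
\epsilon \le (\|\boldsymbol{\delta}_s\|+\|\boldsymbol{\delta}_u\|+\|\boldsymbol{\delta}_{\mathrm{inter}}\|)/r_0 .
\]
I would state explicitly that all nonlinear effects of joint presentation are absorbed exactly into $\boldsymbol{\delta}_{\mathrm{inter}}$, which is defined as a residual. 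The decomposition is therefore an identity rather than an additivity assumption, and the only approximation is the first-order normalization step. I would also verify that the tangent projection of the $\hat{\mathbf{u}}_0$-component of $\boldsymbol{\delta}$ drops out, which holds since $\mathbf{P}_\perp\hat{\mathbf{u}}_0=0$.

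Finally, the ``if and only if'' clause follows from the pairwise decision rule after~\eqref{eq:margin}. I would qualify it: $m_{a_s a_u}>0$ is the criterion for preferring $a_s$ over $a_u$, which is the relevant binary competition under conflict. Full selection of $a_s$ additionally needs positive margins against the other two options, so I would make this restriction to the $a_s$ versus $a_u$ comparison explicit rather than claim it for all $K$ answers.
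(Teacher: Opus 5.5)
Your proposal is correct and follows the paper's own proof essentially step for step: expand at the common base point $\mathbf{h}_0$ via Lemma~\ref{lem:first_order}, identify $\mathbf{P}_\perp(\mathbf{w}_{a_s}-\mathbf{w}_{a_u})$ with $\mathbf{d}^*$, insert $\boldsymbol{\Pi}_{\mathcal{A}}$ using property (iv) of Theorem~\ref{thm:decomp}, and split by linearity into $\mathcal{F}_s$, $-\mathcal{F}_u$, and $\mathcal{I}_{su}$. Your caveat on the final clause is also well taken, and the paper's statement omits it: by the selection rule stated after Eq.~\eqref{eq:margin}, $m_{a_s a_u}^{(\mathrm{joint})}>0$ only settles the $a_s$-versus-$a_u$ competition, so it is necessary but not sufficient for selecting $a_s$ among all four options.
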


\begin{proof}
By Lemma~\ref{lem:first_order}, $\hat{\mathbf{u}}_{su} = \hat{\mathbf{u}}_0 + \frac{1}{r_0}\mathbf{P}_\perp\boldsymbol{\delta}_{su} + O(\epsilon^2)$. Substituting into Eq.~\eqref{eq:margin}:
\[
m_{a_s a_u}^{(\mathrm{joint})} = m_{a_s a_u}^{(0)} + \frac{\sqrt{d}}{r_0}\, \mathbf{d}^{*\top}\, \boldsymbol{\delta}_{su}^\perp + O(\epsilon^2)
\]
By Theorem~\ref{thm:decomp} property (iv), $\mathbf{d}^{*\top}\boldsymbol{\delta}_{su}^\perp = \mathbf{d}^{*\top}\boldsymbol{\Pi}_{\mathcal{A}}\boldsymbol{\delta}_{su}^\perp$. Expanding: $\boldsymbol{\Pi}_{\mathcal{A}}\mathbf{P}_\perp\boldsymbol{\delta}_{su} = \boldsymbol{\delta}_r^s + \boldsymbol{\delta}_r^u + \boldsymbol{\delta}_r^{\mathrm{inter}}$. Substituting the definitions of $\mathcal{F}_s$, $\mathcal{F}_u$, and $\mathcal{I}_{su}$ yields the result.
\end{proof}

Theorem~\ref{thm:margin} decomposes the joint decision into four independently computable components: parametric prior ($m^{(0)}$), sensor authority force ($\mathcal{F}_s$), user authority force ($\mathcal{F}_u$), and interaction ($\mathcal{I}_{su}$).

\begin{corollary}[Condition for Authority-Inversion-Induced Error]
Authority inversion causes the model to select the incorrect answer $a_u$ if and only if:
\begin{equation}
\mathcal{F}_u > m_{a_s a_u}^{(0)} + \mathcal{F}_s + \mathcal{I}_{su} \label{eq:error_condition}
\end{equation}
\end{corollary}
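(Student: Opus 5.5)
The corollary should follow directly from Theorem~\ref{thm:margin} by rearranging the inequality that characterizes an incorrect decision. By that theorem, the joint margin is
\[
m_{a_s a_u}^{(\mathrm{joint})} = m_{a_s a_u}^{(0)} + \mathcal{F}_s - \mathcal{F}_u + \mathcal{I}_{su} + O(\epsilon^2).
\]
I will work at first order, discarding the $O(\epsilon^2)$ remainder exactly as the theorem's ``if and only if'' clause implicitly does. Then $a_u$ beats $a_s$ precisely when $m_{a_s a_u}^{(\mathrm{joint})} < 0$. Moving $\mathcal{F}_u$ to the other side turns this into $\mathcal{F}_u > m_{a_s a_u}^{(0)} + \mathcal{F}_s + \mathcal{I}_{su}$, which is Eq.~\eqref{eq:error_condition}. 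Ties, where the margin equals zero, I will treat as measure-zero or as non-selection of $a_s$, consistent with the strict inequality used in the theorem.

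The one point that needs care is the phrase ``select $a_u$''. Theorem~\ref{thm:margin} only compares $a_s$ with $a_u$, while the model picks the argmax over all $K$ candidates. So the pairwise condition $m_{a_s a_u}^{(\mathrm{joint})}<0$ is necessary for selecting $a_u$, but it is not sufficient, since a third answer could win. I will resolve this by interpreting ``authority-inversion-induced error'' pairwise: $a_u$ is preferred to $a_s$ under the joint context. This matches Definition~1, which frames the inversion as choosing $a_u$ over $a_s$. For the ``if'' direction in the literal argmax sense, I would add the explicit hypothesis that $a_u$ also dominates every other $a_j$, $j\notin\{s,u\}$. In the forced-choice conflict setting, the competition is effectively between $a_s$ and $a_u$.

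I expect this interpretive step, rather than the algebra, to be the main obstacle. I will state the pairwise reading explicitly and note that the equivalence holds modulo the first-order approximation of Lemma~\ref{lem:first_order}. I will also note that it is unaffected by the $\sqrt{d}/r_0$ scaling, because all force terms share that factor.
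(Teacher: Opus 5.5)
Your argument matches the paper's: the corollary has no separate proof and is simply Theorem~\ref{thm:margin}'s condition $m_{a_s a_u}^{(\mathrm{joint})}<0$ rearranged. Your caveats about the dropped $O(\epsilon^2)$ term and the pairwise (rather than full argmax over all $K$ options) reading point to genuine looseness in the paper's own ``if and only if,'' which Theorem~\ref{thm:margin} shares. One small correction: your $\sqrt{d}/r_0$ remark does not justify anything, because $m_{a_s a_u}^{(0)}$ does not carry that factor and so it cannot be divided out of the inequality; nothing in the rearrangement depends on it.
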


\subsection{Research Hypotheses}

Based on the above analysis, we propose five testable hypotheses:

\noindent\textbf{H1} (Existence of authority inversion). On tasks where sensor data is in numerical format, LLMs systematically exhibit AAI $< 0$.

\noindent\textbf{H2} (Null-space dominance of sensor perturbations). $\mathrm{CIR}(c_s) < \mathrm{CIR}(c_u)$, indicating that sensor perturbations project primarily into the null-space subspace.

\noindent\textbf{H3} (Causal intervention efficacy). Ablating $\boldsymbol{\delta}_r^u$ or injecting along the theoretically computed correct direction $\mathbf{d}^*$ significantly improves accuracy; norm-matched random directions or null-space components do not produce similar effects.

\noindent\textbf{H4} (Format dependence). The severity of authority inversion is negatively correlated with the semantic interpretability of the sensor data format.

\noindent\textbf{H5} (Layer-level localizability). Authority inversion signals concentrate in specific layers; ablating a small number of critical layers suffices to improve decisions.

\section{Experimental Methods}

\subsection{Model Selection}

We select four open-source LLMs covering three architecture families and parameter scales from 4B to 35B (\Cref{tab:models}).

\begin{table}[htbp]
\caption{Experimental models.}
\label{tab:models}
\centering\small
\begin{tabular}{lccccl}
\toprule
Model & Params & Active & $d$ & $L$ & Arch. \\
\midrule
Qwen3-4B-Instruct & 4B & 4B & 2,560 & 36 & Dense \\
Meta-Llama-3.1-8B & 8B & 8B & 4,096 & 32 & Dense \\
GLM-4-9B-0414 & 9B & 9B & 4,096 & 40 & Dense \\
Qwen3-35B-A3B & 35B & 3B & 2,048 & 40 & MoE \\
\bottomrule
\end{tabular}
\end{table}

The first three models are dense architectures at parameter scales typical of ubiquitous computing edge deployment. The fourth is a Mixture-of-Experts (MoE) architecture with 35B total parameters but only 3B active per inference. Its inclusion tests whether authority inversion diminishes with increased model capacity. Note that Meta-Llama-3.1-8B is a \emph{base model} (not instruction-tuned or RLHF-aligned), which allows us to partially disentangle format incompatibility from alignment-induced sycophancy (see \Cref{sec:discussion_attribution}). We prioritize open-weight models because our geometric analysis requires access to internal residual stream activations; this enables mechanism-level study but limits coverage of closed-source systems. All models are loaded in float16 precision.

\subsection{Benchmark: UbiContext-Conflict}

The benchmark is designed to isolate conflict handling under controlled conditions rather than to reproduce full deployment complexity. Three design goals guide its construction: (1)~the correct answer under sensor evidence is unambiguous in every instance; (2)~user claims are plausible in form but contradictory in content; and (3)~sensor formats span a gradient of semantic interpretability to the model.

We construct a benchmark of 252 conflict instances spanning three sensor modalities and four data sources, forming a \textbf{format interpretability gradient}.

\textbf{Dataset~1: UCI HAR Activity Recognition (80 instances).} Sensor data is generated from the statistical distributions of the UCI Human Activity Recognition dataset~\cite{anguita2013har}, including 14 IMU features (body acceleration mean/std, gravity acceleration, gyroscope, acceleration magnitude, etc.). We construct 8 conflict pairs (e.g., sensor indicates ``walking'' but user claims ``sitting''), 10 instances each. Sensor data is presented as raw numerical features (e.g., \texttt{tBodyAcc-mean()-X: 0.2801}), a format opaque to LLMs.

\textbf{Dataset~2: Synthetic HAR (64 instances).} Uses the same feature space and conflict structure as Dataset~1, with data independently sampled from class-conditional distributions. This dataset validates robustness to specific data instances.

\textbf{Dataset~3: CASAS Smart Home (48 instances).} Sensor data is generated based on the sensor layout of the CASAS Milan apartment deployment~\cite{cook2013casas}, including motion sensors, door sensors, and appliance sensors in event-log format (e.g., \texttt{2024-01-15 07:02:03 M015\_Kitchen ON}). We construct 8 conflict scenarios, 6 instances each. The event-log format contains partially interpretable semantic information (e.g., room names).

\textbf{Dataset~4: Health Assessment (60 instances).} Sensor data is parameterized from clinical reference ranges (WHO/AHA guidelines), including resting heart rate, HRV, sleep duration/efficiency, steps, and SpO$_2$, presented with semantic labels and personal baselines (e.g., ``Resting HR: 95 bpm, baseline: 65 bpm''). This format is highly interpretable to LLMs.

The four datasets form a format interpretability gradient: IMU numerics (most opaque) $\to$ event logs (intermediate) $\to$ health indicators (most interpretable). For each conflict instance, the user claim is a natural language template that is superficially plausible but semantically contradicts the sensor data.

\subsection{Experimental Conditions}

Each conflict instance is run under the four conditions summarized in \Cref{tab:conditions}.

\begin{table}[htbp]
\caption{Experimental conditions.}
\label{tab:conditions}
\centering\small
\begin{tabular}{lll}
\toprule
Condition & Input & Purpose \\
\midrule
Baseline & Question + options only & Obtain $\mathbf{h}_0$ \\
Sensor-only & Question + sensor data & Quantify $\boldsymbol{\delta}_s$, CIR($c_s$) \\
User-only & Question + user claim & Quantify $\boldsymbol{\delta}_u$, CIR($c_u$) \\
Joint & Question + sensor + user & Observe conflict decision \\
\bottomrule
\end{tabular}
\end{table}

The model's first output token (A/B/C/D) serves as the answer. All evaluations use deterministic decoding (temperature $= 0$, greedy). Option order is fixed across conditions for each instance. Residual streams are extracted at the last layer via forward hooks and stored in float32 on CPU.

\subsection{Six Experiments}

\textbf{Experiment~1 (Behavioral, H1).} Record answer choices under all four conditions. Compute sensor trust rate, user trust rate, and behavioral AAI.

\textbf{Experiment~2 (Geometric, H1+H2).} Extract residual streams and compute CIR, $\mathcal{F}_s$, $\mathcal{F}_u$, $\mathcal{I}_{su}$, AAI, and Theorem~\ref{thm:margin} predictions.

\textbf{Experiment~3 (Causal, H3).} Intervene on the residual stream during the joint-condition forward pass: (a)~ablate $\boldsymbol{\delta}_r^u$; (b)~inject along $\mathbf{d}^*/\|\mathbf{d}^*\|$; (c)~three controls (norm-matched random, null-space component, random injection).

\textbf{Experiment~4 (Format dependence, H4).} (a)~Translate sensor data to semantic and reasoning-based formats; (b)~construct NLP-equivalent versions replacing sensor data with observer natural language reports.

\textbf{Experiment~5 (Layer analysis, H5).} Activation patching: for each layer, replace the joint-condition output with the baseline output and measure margin change. Perform cumulative ablation.

\textbf{Experiment~6 (GAC evaluation).} Evaluate Geometric Authority Calibration on Qwen3-4B and Llama-3.1-8B against chain-of-thought (CoT) prompting and semantic translation baselines. GAC interpolates between joint-condition and baseline-condition activations at critical layers identified in Experiment~5, with coefficient $\alpha$ controlling suppression strength.

\subsection{Statistical Tests}

For H1, we use one-sample $t$-tests on AAI with Cohen's $d$ as effect size. For H2, Wilcoxon signed-rank tests compare paired CIR($c_s$) and CIR($c_u$). Theorem~\ref{thm:margin} validation uses Pearson correlation.

\section{Results}

\Cref{tab:behavioral} presents behavioral results across four models and four datasets. On HAR tasks, authority inversion exhibits extremely high effect sizes and statistical significance.
\begin{table*}[h]
\caption{Trust distribution under the joint condition. bAAI = Trust$_S$ $-$ Trust$_U$.}
\label{tab:behavioral}
\centering\small
\begin{tabular}{llccccc}
\toprule
Model & Dataset & Trust$_S$ & Trust$_U$ & Trust$_O$ & bAAI & $p$ \\
\midrule
Qwen3-4B & HAR (Real) & 0.000 & 1.000 & 0.000 & $-$1.000 & ${<}10^{-45}$ \\
Qwen3-4B & HAR (Synth) & 0.000 & 1.000 & 0.000 & $-$1.000 & ${<}10^{-45}$ \\
Qwen3-4B & Smart Home & 0.104 & 0.896 & 0.000 & $-$0.792 & ${<}10^{-10}$ \\
Qwen3-4B & Health & 0.550 & 0.150 & 0.300 & $+$0.400 & ${<}10^{-4}$ \\
\midrule
Llama-3.1-8B & HAR (Real) & 0.013 & 0.900 & 0.088 & $-$0.888 & ${<}10^{-35}$ \\
Llama-3.1-8B & HAR (Synth) & 0.016 & 0.906 & 0.078 & $-$0.891 & ${<}10^{-28}$ \\
Llama-3.1-8B & Smart Home & 0.104 & 0.833 & 0.062 & $-$0.729 & ${<}10^{-9}$ \\
Llama-3.1-8B & Health & 0.050 & 0.117 & 0.833 & $-$0.067 & 0.209 \\
\midrule
GLM-4-9B & HAR (Real) & 0.000 & 1.000 & 0.000 & $-$1.000 & ${<}10^{-45}$ \\
GLM-4-9B & HAR (Synth) & 0.000 & 1.000 & 0.000 & $-$1.000 & ${<}10^{-45}$ \\
GLM-4-9B & Smart Home & 0.083 & 0.917 & 0.000 & $-$0.833 & ${<}10^{-13}$ \\
GLM-4-9B & Health & 0.250 & 0.000 & 0.750 & $+$0.250 & ${<}10^{-4}$ \\
\midrule
Qwen3-35B & HAR (Real) & 0.000 & 0.988 & 0.012 & $-$0.988 & ${<}10^{-76}$ \\
Qwen3-35B & HAR (Synth) & 0.000 & 1.000 & 0.000 & $-$1.000 & ${<}10^{-45}$ \\
Qwen3-35B & Smart Home & 0.375 & 0.625 & 0.000 & $-$0.250 & 0.083 \\
Qwen3-35B & Health & 0.517 & 0.100 & 0.383 & $+$0.417 & ${<}10^{-4}$ \\
\bottomrule
\end{tabular}
\end{table*}

\begin{figure*}[t]
\centering
\includegraphics[width=\textwidth]{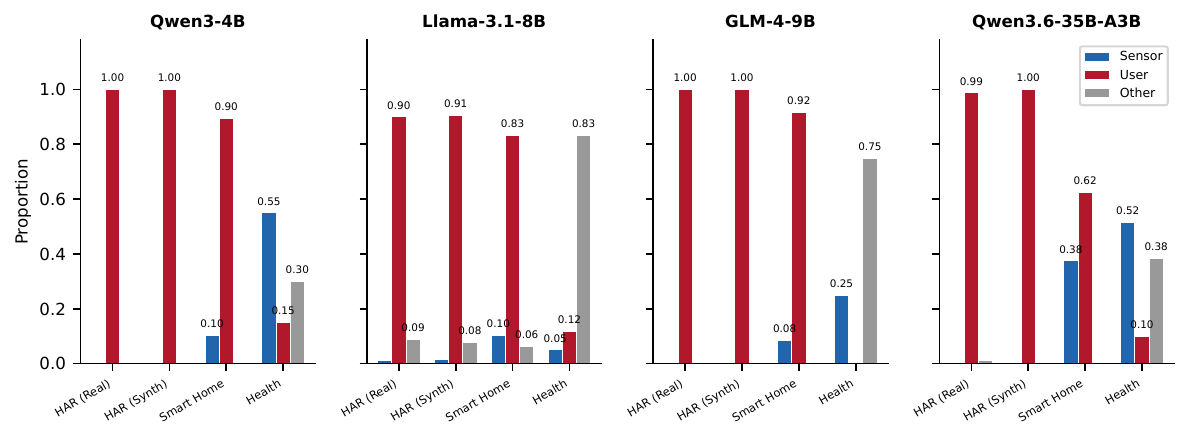}
\caption{Trust distribution under sensor--user conflict across four models and four datasets. Blue: sensor trust rate; Red: user trust rate; Gray: other (neither sensor nor user answer selected).}
\label{fig:trust}
\end{figure*}

\subsection{Authority Inversion is Pervasive and Extreme on Numerical Sensor Tasks (H1)}

\begin{table*}[b]
\caption{Geometric analysis. AAI${<}0$ indicates authority inversion. Thm2 $r$: Pearson correlation between predicted and observed joint margins.}
\label{tab:geometric}
\centering\small
\begin{tabular}{llccccccc}
\toprule
Model & Dataset & AAI & $\pm$std & CIR$_s$ & CIR$_u$ & Thm2 $r$ & $\epsilon$ & $n$ \\
\midrule
Qwen3-4B & HAR (Real) & $-$0.918 & 0.140 & 0.057 & 0.165 & 0.976 & 0.513 & 80 \\
Qwen3-4B & HAR (Synth) & $-$0.926 & 0.147 & 0.056 & 0.165 & 0.977 & 0.510 & 64 \\
Qwen3-4B & Smart Home & $-$0.214 & 0.200 & 0.120 & 0.168 & 0.998 & 0.481 & 48 \\
Qwen3-4B & Health & $-$0.217 & 0.395 & 0.145 & 0.178 & 1.000 & 0.486 & 60 \\
\midrule
Llama-3.1-8B & HAR (Real) & $-$0.777 & 0.423 & 0.027 & 0.067 & 0.999 & 0.526 & 80 \\
Llama-3.1-8B & HAR (Synth) & $-$0.664 & 0.496 & 0.027 & 0.066 & 0.999 & 0.524 & 64 \\
Llama-3.1-8B & Smart Home & $-$0.217 & 0.477 & 0.027 & 0.038 & 0.999 & 0.641 & 48 \\
Llama-3.1-8B & Health & $+$0.258 & 0.896 & 0.043 & 0.027 & 1.000 & 0.350 & 60 \\
\midrule
GLM-4-9B & HAR (Real) & $-$0.818 & 0.363 & 0.071 & 0.118 & 0.912 & 0.469 & 80 \\
GLM-4-9B & HAR (Synth) & $-$0.735 & 0.427 & 0.075 & 0.114 & 0.928 & 0.454 & 64 \\
GLM-4-9B & Smart Home & $-$0.026 & 0.200 & 0.087 & 0.113 & 0.998 & 0.563 & 48 \\
GLM-4-9B & Health & $+$0.234 & 0.359 & 0.074 & 0.049 & 0.999 & 0.480 & 60 \\
\midrule
Qwen3-35B & HAR (Real) & $-$0.751 & 0.433 & 0.080 & 0.128 & 0.786 & 0.428 & 80 \\
Qwen3-35B & HAR (Synth) & $-$0.839 & 0.330 & 0.080 & 0.132 & 0.840 & 0.423 & 64 \\
Qwen3-35B & Smart Home & $+$0.064 & 0.270 & 0.123 & 0.104 & 0.969 & 0.459 & 48 \\
Qwen3-35B & Health & $+$0.078 & 0.712 & 0.112 & 0.047 & 0.980 & 0.403 & 60 \\
\bottomrule
\end{tabular}
\end{table*}

On HAR Real and HAR Synth, all four models' sensor trust rates approach 0\% (maximum: Llama's 1.6\%), while user trust rates range from 88.8\% to 100\%. Behavioral AAI ranges from $-0.888$ to $-1.000$, all with $p \ll 0.001$. The two HAR datasets' results are highly consistent, indicating robustness to specific data instances.

The CASAS smart home task presents an intermediate state. The three smaller models show bAAI from $-0.729$ to $-0.833$ ($p < 10^{-9}$). The 35B model's bAAI of $-0.250$ ($p = 0.083$) approaches but does not reach statistical significance, suggesting that larger capacity may offer some advantage for event-log formats.

On the health task, authority inversion disappears or reverses. Qwen3-4B and 35B show positive bAAI ($+0.400$ and $+0.417$, $p < 10^{-4}$), indicating correct sensor prioritization with semantically interpretable data.

A key finding is that \textbf{scaling to 35B parameters does not resolve authority inversion}. Qwen3-35B's sensor trust rate on HAR Real remains 0\% (AAI $= -0.988$), virtually identical to the 4B model (\Cref{fig:trust} visualizes the trust distribution across all model--dataset combinations). This indicates that authority inversion is not a capacity problem but a structural incompatibility between sensor data format and the LLM's token embedding space.

\subsection{Geometric Mechanism: Sensor Perturbations Trapped in the Null-Space Subspace (H1+H2)}

\Cref{tab:geometric} presents geometric analysis results.

\begin{figure*}[b]
  \centering
  
  \includegraphics[width=\textwidth]{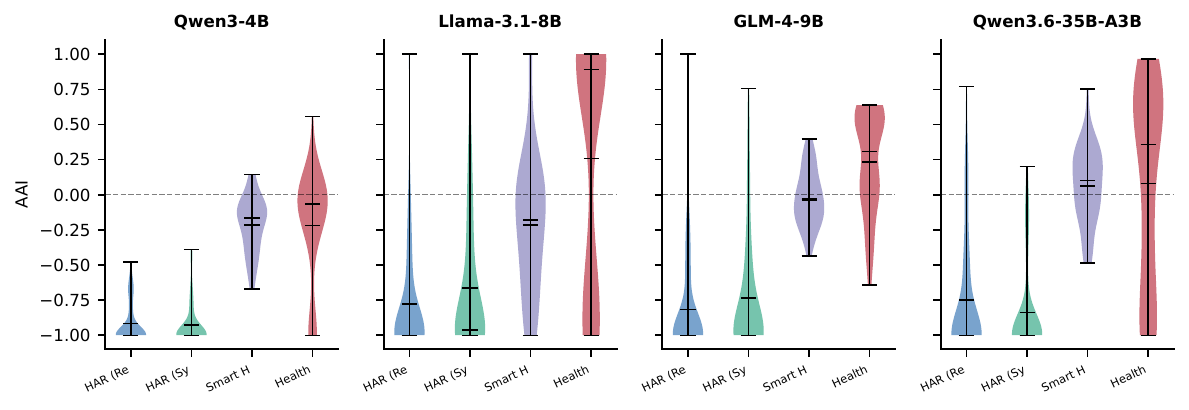}
  \caption{Distribution of geometric AAI across models and datasets. Dashed line marks AAI $= 0$. Values below zero indicate authority inversion.}
  \label{fig:aai}
  
  \vspace{1em} 
  
  \includegraphics[width=\textwidth]{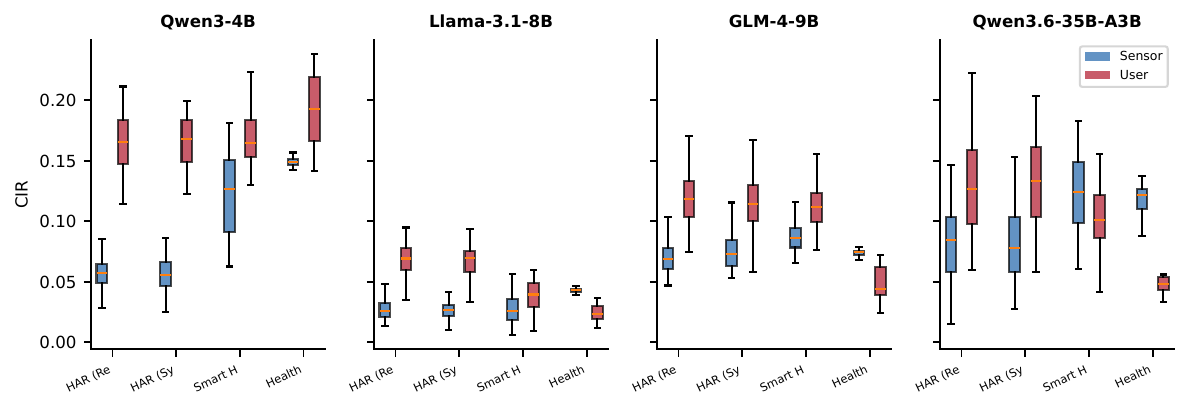}
  \caption{CIR comparison: sensor (blue) vs.\ user (red). On HAR tasks, CIR$_s$ approaches the random baseline, while CIR$_u$ is significantly higher.}
  \label{fig:cir}
  
\end{figure*}

On HAR tasks, all four models' geometric AAI is significantly negative ($-0.664$ to $-0.926$, all $p < 10^{-17}$; see \Cref{fig:aai} for the full distribution). Aggregated across all four models and both HAR datasets, the mean AAI is $-0.805$ ($t = -51.43$, $p = 2.78 \times 10^{-217}$, $N = 576$, Cohen's $d = -2.14$), an extremely large effect size.

CIR comparisons reveal the geometric root cause (\Cref{fig:cir}). On HAR tasks, CIR$_s$ ranges from 0.027 to 0.080, while CIR$_u$ ranges from 0.066 to 0.165 (all paired Wilcoxon $p < 10^{-11}$).

\subsection{Format Interpretability Gradient (H4)}

The behavioral format gradient (from AAI $= -0.97$ extreme inversion to AAI $= +0.25$ equilibrium recovery, summarized in \Cref{tab:format_gradient}) finds a precise correspondence at the geometric representation level: as the input format transitions from purely numerical to semantically rich, context perturbations smoothly shift from ``projecting primarily into the null-space subspace'' to ``effectively injecting into the predictive subspace.''

\begin{table}[h!]
\caption{Format interpretability and authority inversion severity (cross-model average behavioral AAI).}
\label{tab:format_gradient}
\centering\small
\begin{tabular}{llcc}
\toprule
Format Type & Dataset & Mean bAAI & Severity \\
\midrule
IMU numerics & HAR Real & $-$0.969 & Extreme \\
IMU numerics & HAR Synth & $-$0.973 & Extreme \\
Event logs & CASAS & $-$0.651 & Severe \\
Semantic indicators & Health & $+$0.250 & None \\
\bottomrule
\end{tabular}
\end{table}
Because the four datasets differ not only in sensor format but also in task domain, we interpret the cross-dataset gradient as suggestive rather than definitive evidence for a pure format effect. We therefore complement it with within-task format-translation experiments that manipulate representation while holding the underlying sensor signal fixed.

Translation experiments further support this explanation. For Llama-3.1-8B, translating sensor data from numerical to reasoning-based semantic descriptions increases CIR from 0.045 to 0.058 and accuracy from 4.0\% to 15.5\%. Although these improvements do not fully eliminate authority inversion, the direction is consistent.

\subsection{Empirical Validation of Theoretical Predictions}

Theorem~\ref{thm:margin}'s prediction accuracy is satisfactory across all model--task combinations (\Cref{fig:thm2}). The three dense-architecture models achieve Pearson $r > 0.912$ (up to 1.000). The 35B MoE model's $r$ values are slightly lower on HAR tasks (0.786 and 0.840), likely due to the additional nonlinearity introduced by expert routing in MoE architectures. Nonetheless, all $r > 0.78$, indicating practical predictive power across architectures.

\begin{figure*}[htbp]
\centering
\includegraphics[width=\textwidth]{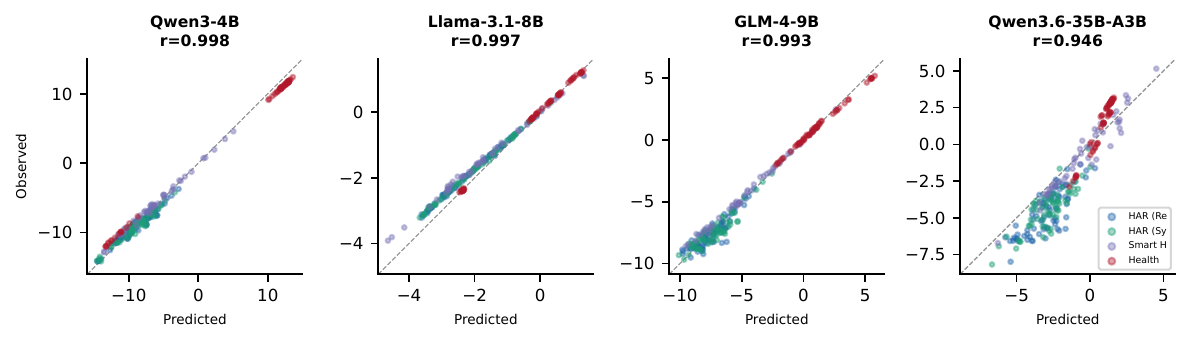}
\caption{Theorem~\ref{thm:margin} validation: predicted vs.\ observed decision margin. Each point is one conflict instance. The diagonal represents perfect prediction.}
\label{fig:thm2}
\end{figure*}

\subsection{Causal Validation: Theory-Guided Repair (H3)}

\begin{table*}[t]
\caption{Causal intervention results per task. Abl.: accuracy after ablation. Inj.: flip rate (wrong $\to$ correct).}
\label{tab:causal}
\centering\small
\begin{tabular}{llccccc}
\toprule
Model & Dataset & Abl.\ $\boldsymbol{\delta}_r^u$ & Abl.\ Rand & Abl.\ $\boldsymbol{\delta}_\bot$ & Inj.\ Theory & Inj.\ Rand \\
\midrule
Qwen3-4B & HAR (Real) & 0.300 & 0.000 & 0.000 & \textbf{0.963} & 0.000 \\
Qwen3-4B & HAR (Synth) & 0.281 & 0.000 & 0.000 & \textbf{0.969} & 0.000 \\
Qwen3-4B & Smart Home & 0.438 & 0.104 & 0.104 & \textbf{0.628} & 0.000 \\
\midrule
Llama-3.1-8B & HAR (Real) & 0.237 & 0.013 & 0.013 & \textbf{0.684} & 0.000 \\
Llama-3.1-8B & HAR (Synth) & 0.250 & 0.016 & 0.016 & \textbf{0.730} & 0.000 \\
Llama-3.1-8B & Smart Home & 0.458 & 0.104 & 0.104 & \textbf{0.725} & 0.000 \\
\midrule
GLM-4-9B & HAR (Real) & 0.125 & 0.000 & 0.000 & \textbf{0.950} & 0.000 \\
GLM-4-9B & HAR (Synth) & 0.047 & 0.000 & 0.000 & \textbf{0.875} & 0.000 \\
GLM-4-9B & Smart Home & 0.604 & 0.083 & 0.083 & \textbf{0.841} & 0.000 \\
\midrule
Qwen3-35B & HAR (Real) & 0.212 & 0.000 & 0.000 & \textbf{0.988} & 0.000 \\
Qwen3-35B & HAR (Synth) & 0.281 & 0.000 & 0.000 & \textbf{1.000} & 0.000 \\
Qwen3-35B & Smart Home & 0.625 & 0.417 & 0.375 & \textbf{1.000} & 0.000 \\
\bottomrule
\end{tabular}
\end{table*}

\begin{figure*}[t]
\centering
\includegraphics[width=\textwidth]{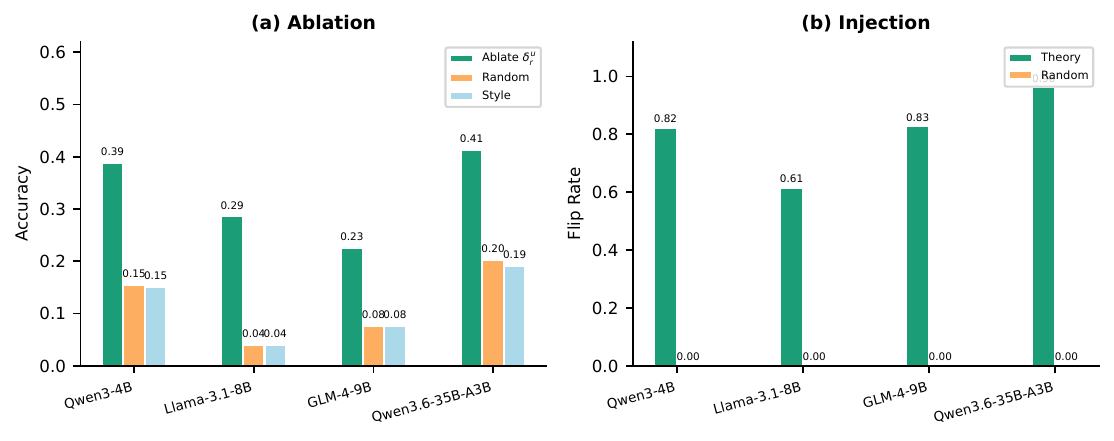}
\caption{Causal interventions. (a)~Ablation: accuracy after removing the user predictive component vs.\ controls. (b)~Injection: flip rate using theory-guided direction vs.\ random direction.}
\label{fig:causal}
\end{figure*}

The causal injection experiments provide the strongest causal support for the theoretical framework in our study (\Cref{tab:causal} and \Cref{fig:causal}). On instances where the model selected the incorrect answer, injecting along the theoretically computed correct decision direction $\mathbf{d}^*/\|\mathbf{d}^*\|$ achieves a cross-model, cross-task average flip rate of \textbf{80.2\%} ($N = 802$). The 35B model's flip rates are particularly striking: 98.8\% on HAR Real, 100\% on both HAR Synth and Smart Home. In contrast, norm-matched random-direction injection achieves a flip rate at the statistical null level ($\leq 0.37\%$ at 95\% confidence, with zero flips observed across all $N = 802$ random injection trials).

This contrast---theory-guided 80.2\% vs.\ random-direction $<$0.4\%---provides strong causal support for the predictive--null-space decomposition as a faithful account of the model's decision mechanism. It demonstrates that the predictive--null-space decomposition is not merely a descriptive geometric tool but can precisely guide targeted modifications to model decisions. When we adjust only a geometrically minimal direction in the residual stream---one computed entirely from the model's own weight matrices without any gradient-based optimization---the model's previously intractable incorrect decision reverses.

Ablation experiments also support the framework, though with more moderate effects. On HAR tasks, ablating the user predictive component $\boldsymbol{\delta}_r^u$ raises accuracy from 0--1.6\% to 4.7--30.0\%, while ablating norm-matched random directions and null-space components leaves accuracy at 0--1.6\%. The moderate post-ablation accuracy is expected: ablating $\boldsymbol{\delta}_r^u$ removes the user claim's push toward the wrong answer but cannot increase the sensor's push toward the correct answer---if the sensor perturbation's predictive component is itself negligible (CIR$_s \approx 0.03$--$0.08$), the model reverts to its parametric prior rather than shifting to the sensor-supported answer.

\subsection{Geometric Authority Calibration (GAC)}

\Cref{tab:gac} compares GAC against alternative correction strategies.

\begin{table}[h]
\caption{Correction strategy comparison (accuracy). GAC $\alpha$: interpolation coefficient (lower = stronger suppression).}
\label{tab:gac}
\centering\small
\begin{tabular}{lccccc}
\toprule
Strategy & Overall & HAR(R) & HAR(S) & CASAS & Health \\
\midrule
\multicolumn{6}{l}{\textit{Qwen3-4B}} \\
No intervention & 0.151 & 0.000 & 0.000 & 0.104 & 0.550 \\
GAC $\alpha{=}0.1$ & \textbf{0.290} & \textbf{0.275} & \textbf{0.234} & 0.125 & 0.500 \\
CoT prompting & 0.194 & 0.037 & 0.125 & 0.042 & 0.600 \\
Semantic trans. & 0.151 & 0.050 & 0.062 & 0.000 & 0.500 \\
\midrule
\multicolumn{6}{l}{\textit{Llama-3.1-8B}} \\
No intervention & 0.040 & 0.013 & 0.016 & 0.104 & 0.050 \\
GAC $\alpha{=}0.1$ & \textbf{0.230} & \textbf{0.225} & \textbf{0.219} & \textbf{0.229} & 0.250 \\
CoT prompting & 0.056 & 0.062 & 0.047 & 0.125 & 0.000 \\
Semantic trans. & 0.155 & 0.150 & 0.141 & 0.062 & 0.250 \\
\bottomrule
\end{tabular}
\end{table}

\begin{figure*}[h]
\centering
\includegraphics[width=\textwidth]{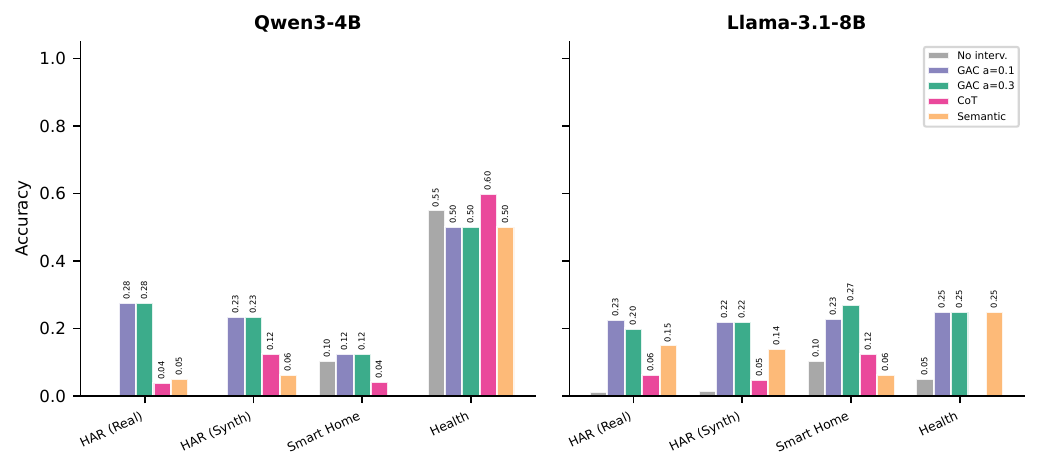}
\caption{Correction strategy comparison across datasets. GAC consistently outperforms CoT prompting and semantic translation on sensor-opaque tasks.}
\label{fig:gac}
\end{figure*}

As shown in \Cref{fig:gac}, GAC significantly outperforms prompt-based strategies on both models. For Qwen3-4B, GAC raises HAR Real accuracy from 0\% to 27.5\%, while CoT achieves only 3.7\% and semantic translation 5.0\%. The failure of CoT to rescue sensor authority aligns with recent findings by Turpin et al.~\cite{turpin2023language}, who demonstrated that LLM CoT explanations are often unfaithful; rather than objectively evaluating conflicting evidence, the model uses CoT to retroactively justify its pre-existing sycophantic preference for the user's claim. For Llama-3.1-8B, GAC's improvement is more uniform: all four datasets reach 21.9--27.1\%.

Safety checks show that under the sensor-only condition (no conflict), GAC reduces accuracy from 28.6--40.5\% to 23.0--29.0\%, indicating that the current implementation partially affects normal context processing alongside user authority suppression. The current latency overhead is 39.5--55.8ms ($\sim$90\%), primarily from the additional baseline forward pass. In deployment, baseline activations can be precomputed and cached, reducing incremental overhead to near zero. We emphasize that GAC in its current form serves as a proof-of-concept demonstrating that representational interventions can mitigate authority inversion; production deployment would require finer-grained, head-level interventions to reduce the safety--efficacy tradeoff documented above.

\subsection{Layer-Level Analysis (H5)}

Layer importance analysis (\Cref{fig:layers}) reveals a consistent pattern across all four models: the layers contributing most to user authority concentrate in the final $\sim$20\% of the network. Qwen3-4B's top-5 layers are [35, 32, 33, 34, 30] (of 36 total); Llama's are [31, 30, 29, 28, 22] (of 32); GLM's are [35, 34, 36, 33, 37] (of 40); and the 35B model's are [39, 38, 36, 35, 37] (of 40).

Cumulative ablation shows that ablating just 1 layer raises accuracy from 33--40\% to 40--60\%. The 35B model shows the largest improvement (40\% $\to$ 60\%), consistent with greater inter-layer redundancy. However, further increasing the number of ablated layers (3, 5) yields no additional improvement, a saturation effect likely reflecting redundant information propagation across layers.

\begin{figure*}[htbp]
\centering
\includegraphics[width=\textwidth]{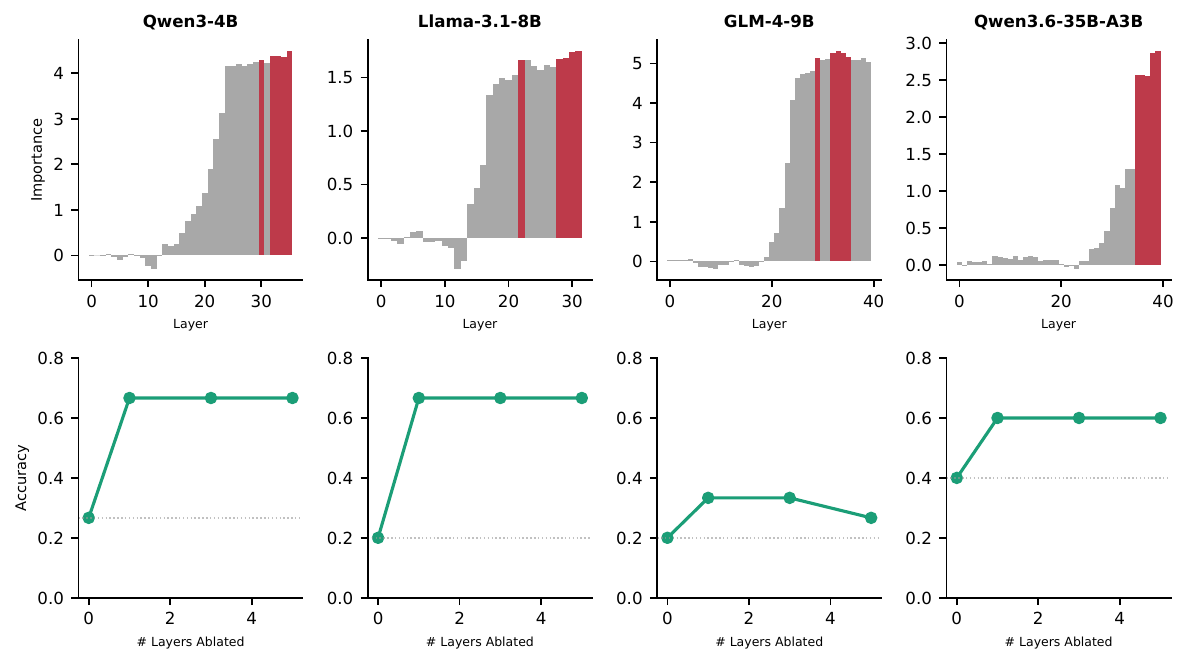}
\caption{Layer-level analysis. Top row: per-layer authority importance (red: top-5 layers). Bottom row: cumulative ablation curves.}
\label{fig:layers}
\end{figure*}

\subsection{Model Scale Effect}

\Cref{fig:scale} summarizes the effect of model scale on authority inversion. On HAR tasks, from 4B to 35B, sensor trust rates remain at 0\%, AAI remains deeply negative ($-0.66$ to $-0.93$), and theory-guided injection flip rates remain above 68\%. This cross-scale consistency indicates that authority inversion is a structural problem in how LLMs process heterogeneous-format information, not a capacity limitation resolvable by scaling.

However, scale is not entirely irrelevant when formats are semi-structured. The 35B model does show improvement on the CASAS task (sensor trust 37.5\% vs.\ 8.3--10.4\% for smaller models). CASAS event logs contain more structured semantic information than raw IMU numerics (e.g., room names in sensor IDs), and larger models may better exploit such semi-structured formats. This further supports the format-dependence hypothesis: the key factor is not merely model size, but whether the specific data format allows larger models to effectively encode it into the predictive subspace.

\begin{figure*}[htbp]
\centering
\includegraphics[width=\textwidth]{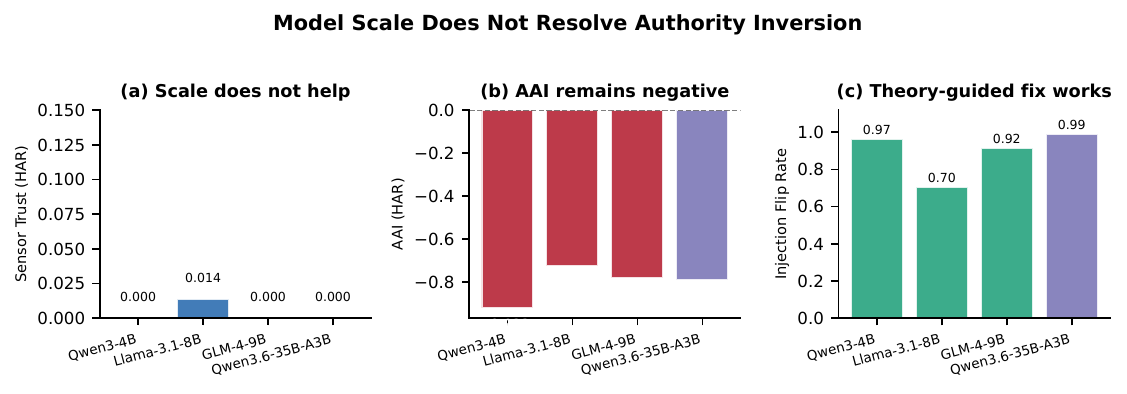}
\caption{Model scale does not resolve authority inversion on numerical-format tasks. (a)~Sensor trust rate remains near zero across scales. (b)~AAI remains deeply negative. (c)~Theory-guided injection remains highly effective.}
\label{fig:scale}
\end{figure*}

\section{Discussion}

\subsection{Summary and Interpretation of Main Findings}

\Cref{fig:summary} consolidates the evidence across all experiments. The core finding of this paper can be stated as a conditional proposition: \textbf{when sensor data is presented in a format that the LLM's token embedding space cannot effectively encode, the model systematically allows the user's natural language claim to dominate the decision, diluting the objective physical measurement's influence below the answer-selection decision boundary.} This is not random noise---on HAR tasks, near-zero sensor trust rates are consistent across four architectures and scales, across two independent HAR datasets.

The geometric analysis provides a mechanism-level explanation. The LLM's output decision is equivalent to directional selection on the hypersphere, and context influence on this direction is transmitted only through the predictive subspace---a subspace of extremely low dimension ($K' \leq 4$). Sensor data's numerical features, after token embedding and Transformer processing, produce perturbations whose directions are nearly orthogonal to the answer subspace (CIR$_s \approx 0.03$--$0.08$), while user natural language claims more effectively inject perturbation energy into the predictive subspace (CIR$_u$ is $1.4$--$2.9\times$ CIR$_s$). This geometric inefficiency in processing numerical formats aligns perfectly with the recently discovered ``topological price'' of discrete logic in LLM representations~\cite{zhang2026geometric}, confirming that numerical sensor data structurally struggles to align with the language-dominated predictive manifold.

\subsection{Disentangling Format Incompatibility from Alignment Bias}
\label{sec:discussion_attribution}

The format gradient across four datasets provides critical evidence for distinguishing two possible mechanisms. The first is \emph{format incompatibility}: the numerical format of sensor data cannot be effectively encoded by the LLM's token embedding space into the predictive subspace. The second is \emph{alignment bias}: RLHF/SFT-aligned models are trained to defer to human users~\cite{perez2023discovering,sharma2023sycophancy}, and this sycophantic inertia may exist independently of sensor format.

Our data suggest that both mechanisms may coexist and mutually amplify, but format incompatibility is the dominant factor. Three lines of evidence support this judgment: (1)~If alignment bias were the sole cause, the health task should exhibit similar inversion severity, yet health AAI is positive ($+0.25$), showing that models can overcome sycophantic tendencies when sensor format is interpretable. (2)~Meta-Llama-3.1-8B is a \emph{base model} that has not undergone instruction tuning or RLHF alignment, yet it exhibits significant authority inversion on HAR tasks (bAAI $= -0.888$), directly demonstrating that format incompatibility alone suffices to cause authority inversion even in the absence of alignment bias. (3)~The geometric observation that CIR$_s$ approaches the random baseline is a purely representation-level measurement independent of alignment training.

Nonetheless, we do not rule out the possibility that alignment training further exacerbates authority inversion. In Instruct models, RLHF's over-optimization for ``human preference'' may preset a baseline margin $m^{(0)}$ biased toward the user. As framed by the signal competition dynamics of social compliance~\cite{zhang2026humanlike}, user claims act as strong sociolinguistic signals that dominate the compliance subspace, ensuring that even if sensor perturbations inject some energy into the predictive subspace, it is insufficient to cross the elevated decision threshold. The interaction between format incompatibility and this compliance bias is a direction worthy of further investigation.

\subsection{Model Scale is Not a Solution}

An important observation of this paper is that, within the range of models tested, scaling parameters from 4B to 35B (MoE architecture) does not mitigate authority inversion on HAR tasks. This result implies that authority inversion is rooted not in insufficient reasoning capacity but in the structural incompatibility between sensor data format and the LLM's token embedding space. Larger models possess stronger reasoning capabilities, but if sensor signals never enter the predictive subspace in the first place, stronger reasoning has nothing to work with.

We note that the 35B model is an MoE architecture whose conditional computation routing mechanism~\cite{jiang2024mixtral} differs fundamentally from the three dense-architecture models. Therefore, this observation cannot be directly generalized as a universal claim that ``scaling laws are ineffective against authority inversion''---whether larger dense-architecture models behave consistently remains to be verified in future work.

The 35B model's improvement on the CASAS task (sensor trust 37.5\% vs.\ 8.3--10.4\%) suggests that the benefit of scale depends on the ``semantic density'' of the sensor format: for completely opaque formats (IMU numerics), scale is ineffective; for semi-structured formats, scale may help.

\subsection{Implications for Ubiquitous Computing System Design}

Our findings carry direct design implications for LLM deployment in ubiquitous computing.

First, \textbf{embedding raw sensor numerics directly into prompts is not a safe practice}. If the sensor format is opaque to the LLM, the model may effectively exclude this information from its decision basis. System designers should use CIR to assess the degree to which a specific sensor format is effectively utilized. CIR$_s$ near the random baseline ($\approx 0.03$) is a clear warning signal.

Second, \textbf{semantic translation is a valuable but insufficient intermediate solution}. Our translation experiments show that converting sensor data to semantically interpretable text can partially increase sensor weight in LLM decisions (e.g., Llama accuracy from 4.0\% to 15.5\%). This aligns with recent 2026 findings by Romero et al.~\cite{ren2026teamllm}, who demonstrated that converting raw multimodal sensor data into structured language contexts helps LLMs overcome the performance bottlenecks of traditional statistical models, a strategy also supported by recent efforts employing LLM-guided semantic alignment for activity recognition~\cite{yan2025lanhar}. However, our results show this improvement varies across models and is not always sufficient. More fundamental solutions require direct intervention in the representation space. Our GAC method provides a proof of concept for inference-time intervention, while the recent development of multimodal architectures like HMotionGPT~\cite{gao2026hmotiongpt}---which directly projects raw IMU signals into the LLM's natural language embedding space at the foundation level---validates that native representation alignment is the promising structural solution to format incompatibility.

Third, \textbf{activity recognition is a particularly high-risk application domain}. Among our four datasets, IMU activity recognition exhibits the most severe and universal authority inversion. This domain extensively uses accelerometer and gyroscope data---precisely the sensor formats most difficult for LLMs to understand.

\subsection{On the Assumption of Sensor Correctness}

A natural objection to our framing is that real-world sensors are not infallible---
they suffer from drift, miscalibration, and spurious activations---and therefore 
an LLM's tendency to trust users over sensors might sometimes constitute 
reasonable fault tolerance rather than a failure mode.

We acknowledge that sensor reliability varies across deployments, and that a 
well-designed system should not blindly trust any single information source. 
However, this objection, while philosophically sound, does not undermine our 
findings for two reasons.

First, our experimental design ensures that sensor data is objectively correct 
in every test instance. The question is not whether the model makes a 
``wrong bet'' on an uncertain signal, but whether the model has the 
\emph{capability} to incorporate sensor evidence when that evidence is 
unambiguously correct. Our results show that it does not: CIR$_s$ values 
approaching the random baseline (0.027--0.031) indicate that numerical sensor 
data fails to enter the decision channel altogether. A system that \emph{cannot} 
use sensor data---even when that data is correct and unambiguous---is 
fundamentally different from a system that \emph{chooses} not to trust 
unreliable sensors after weighing the evidence.

Second, the format-dependence of our findings further refutes the 
``fault tolerance'' interpretation. If the model were performing genuine 
credibility assessment (``sensors might be wrong, so I'll trust the user''), 
this reasoning should apply equally regardless of sensor data format. Instead, 
authority inversion disappears when the same type of physiological information 
is presented in semantically interpretable formats (Health AAI $= +0.25$). 
The model does not distrust sensors \emph{in general}; it specifically fails 
to process \emph{numerically formatted} sensor data. This is an encoding 
limitation, not a deliberate credibility judgment.

We therefore maintain that authority inversion, as defined in this paper, 
represents a genuine capability gap. Whether a deployed system should 
promisingly trust sensors or users in a given scenario is an application-level 
design decision; our contribution is to show that current LLMs lack the 
representational capacity to make this decision in an informed manner when 
sensor data arrives in numerical formats.

\subsection{Human-Centered Implications}

The framing of authority inversion as a failure mode should not be read as an argument that sensors should always override users. In many ubiquitous computing scenarios---pain self-assessment, fatigue reporting, emotional state, subjective comfort---the user's self-report is the most valid source of information, and a system that systematically discounts it would be harmful. The problem we identify is not that LLMs trust users, but that their authority allocation is \emph{implicit, format-dependent, and opaque to the system designer}. Recent human-computer interaction studies have characterized such sycophantic models as ``invisible saboteurs'' that actively mislead users by validating their incorrect beliefs~\cite{invisible_saboteurs_2025}. In ubiquitous sensing, this sabotage becomes literal: an LLM that blindly agrees with a user's subjective feeling over objective physiological distress could fail to trigger critical health alerts.

This distinction has practical consequences. A health monitoring system may reasonably assign high authority to heart-rate sensors for arrhythmia detection but defer to the user's self-report for perceived exertion. Under the current LLM behavior, however, the authority ranking is not determined by application requirements but by the accident of input format: numerical sensor data is systematically underweighted regardless of its clinical importance, while natural-language claims receive disproportionate influence regardless of their reliability.

We therefore argue that the appropriate design response is not to universally boost sensor authority, but to make authority allocation \emph{explicit, application-dependent, and auditable}. The CIR and AAI metrics proposed in this paper provide a first step toward auditability: they allow system designers to measure, before deployment, whether the LLM's implicit authority allocation matches the intended design. GAC provides a proof-of-concept for runtime correction when it does not. More broadly, our findings motivate the development of conflict-handling mechanisms in LLM-mediated ubiquitous systems that expose their authority assumptions to inspection and configuration, rather than burying them in learned representations.

\subsection{Generality of the Theoretical Contribution}

The predictive--null-space decomposition theory and CIR/AAI metrics are not limited to ubiquitous computing. The framework's applicability conditions are: (i)~the model uses RMSNorm or equivalent normalization; (ii)~the decision problem can be expressed as selection over a finite candidate set. These conditions are satisfied in most contemporary LLM applications. The framework can thus extend to other domains involving heterogeneous information source conflicts, such as multimodal reasoning and evidence evaluation in retrieval-augmented generation.

Theorem~\ref{thm:margin}'s joint decision margin formula provides a tool for per-instance decision diagnosis. When a system needs to assess at runtime whether a specific decision may be affected by authority inversion, it can compute $\mathcal{F}_s$, $\mathcal{F}_u$, and AAI without retraining the model.

\begin{figure*}[htbp]
\centering
\includegraphics[width=\textwidth]{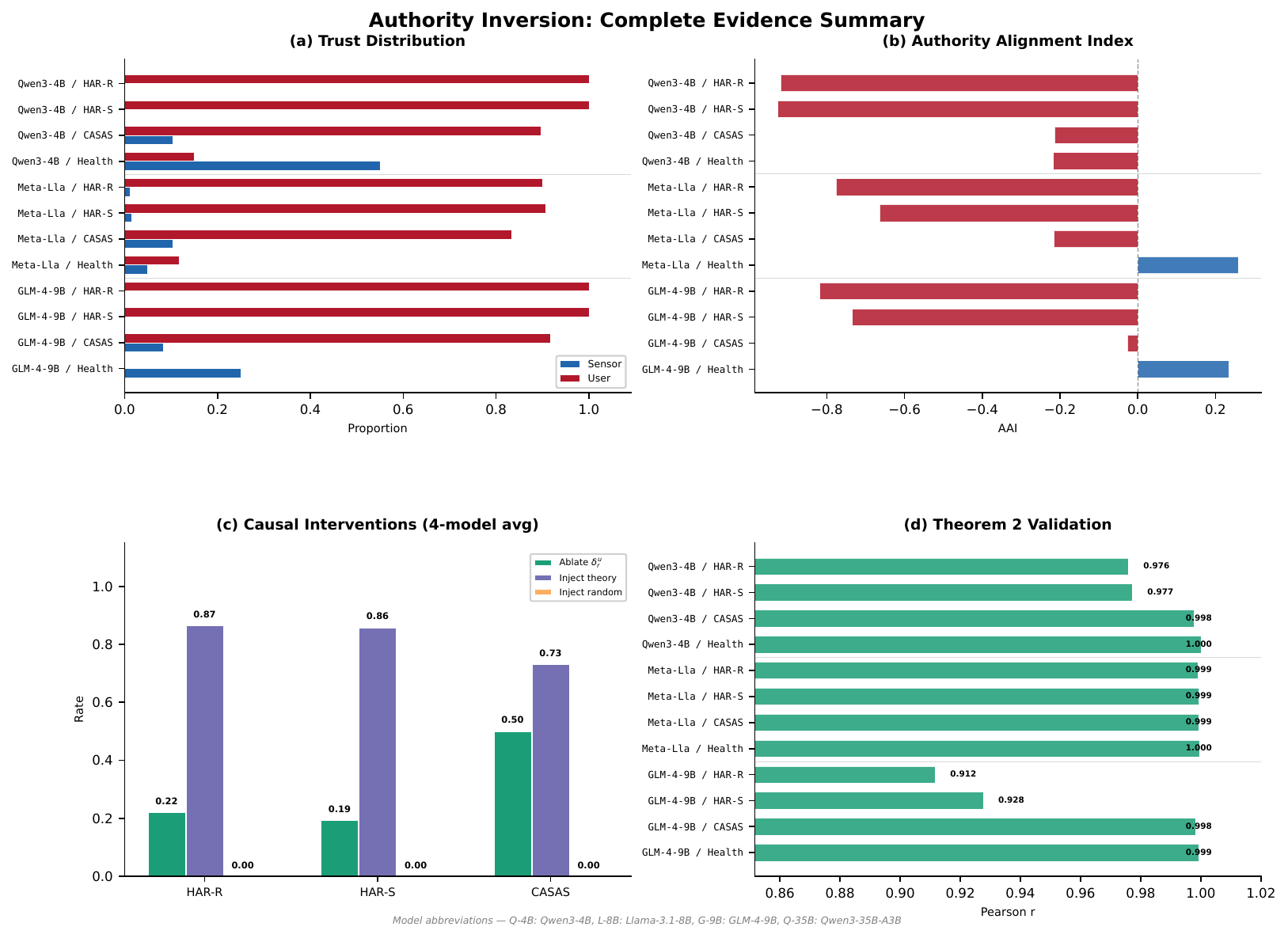}
\caption{Complete evidence summary. (a)~Trust distribution across all model--dataset combinations. (b)~Authority Alignment Index. (c)~Causal intervention results (cross-model average). (d)~Theorem~\ref{thm:margin} validation (Pearson $r$).}
\label{fig:summary}
\end{figure*}

\subsection{Threats to Validity}

\textbf{Construct validity.} Our conflict instances are constructed from templates rather than naturally occurring disagreements. User claims follow fixed linguistic patterns, and sensor data is generated from statistical distributions or clinical reference ranges rather than captured from real deployments. While this ensures unambiguous ground truth and experimental controllability, it may not capture the full diversity of real-world conflicts. Importantly, our core mechanistic finding---that numerical sensor perturbations project into the representational null space (CIR$_s$ approaching the random baseline of 0.031)---is a property of how the model's tokenizer and Transformer layers encode numerical formats, not of the specific data values. This encoding limitation would persist regardless of whether the numerical features originate from generated or real-world sensor streams.

\textbf{Internal validity.} The forced four-choice format (A/B/C/D) enables precise geometric analysis via the answer subspace but differs from open-ended reasoning in deployed systems. Extending CIR and AAI to free-form generation would require a different analytical framework. Additionally, three of our four models are Instruct/Chat variants; although the base model (Llama-3.1-8B) also exhibits authority inversion (see \Cref{sec:discussion_attribution}), we cannot fully rule out alignment training's amplifying effect on Instruct models. An ideal design would pair-compare base and instruct versions of the same model.

\textbf{External validity.} Our benchmark covers three sensor modalities and four datasets, but does not include long-term interaction, continuous decision-making, or closed-loop control scenarios common in ubiquitous deployments. The specific behavioral failure rates (e.g., 0\% sensor trust on HAR) may differ in real deployments where sensor data is presented differently or where the LLM operates as one component in a larger pipeline.

\textbf{Generalizability across architectures.} Theorem~\ref{thm:margin}'s prediction accuracy on the 35B MoE model ($r = 0.786$--$0.840$ on HAR) is lower than on dense models ($r > 0.91$), likely because MoE's conditional expert routing introduces nonlinearity beyond the first-order approximation. Whether larger dense-architecture or closed-source models (e.g., GPT-4, Claude) exhibit similar authority inversion cannot be analyzed with our geometric framework due to inaccessible internal representations; behavioral-level testing can serve as a starting point.

\textbf{GAC deployment constraints.} GAC requires white-box access to model internals (residual stream activations at each layer), per-model identification of critical layers, and an additional baseline forward pass that currently adds 39.5--55.8ms ($\sim$90\%) latency overhead. It is not applicable to closed-source API models. The current layer-level interpolation also reduces non-conflict accuracy by 5--12 percentage points; finer-grained head-level intervention may improve this tradeoff but requires greater computational cost.

\textbf{Prompt and option sensitivity.} User claims are expressed through a limited set of templates, and answer options follow a fixed A/B/C/D ordering. Although the strong effect sizes suggest robustness, richer linguistic variability and randomized option ordering may alter the magnitude of authority inversion. Future work should systematically evaluate prompt sensitivity.

\section{Conclusion}

This paper reveals authority inversion in LLM-powered ubiquitous computing systems: when sensor data is presented in numerical formats, models systematically defer to users' natural language claims, allowing objective sensor evidence's influence in the decision channel to be diluted below the answer-selection boundary. This phenomenon spans four model architectures (4B--35B), three sensor modalities, and four datasets, and does not diminish with increased model scale.

Through a theoretical framework based on hyperspherical directional geometry, we trace the root cause to the inefficient encoding of sensor data perturbations in the answer predictive subspace---sensor information enters the model's representation space but its perturbation energy projects primarily into the high-dimensional null-space subspace (mean CIR$_s$ of only 0.03--0.08). The format interpretability gradient across four datasets further confirms the format dependence of this phenomenon: from completely opaque IMU numerics to semantically rich health indicators, authority inversion severity monotonically decreases.

This finding raises a question of broad significance for the ubiquitous computing community: as LLMs increasingly take on reasoning roles that fuse sensor data with user interaction, what theoretical tools and system architectures are needed to ensure trustworthy behavior under multi-source information conflict? The CIR and AAI metrics proposed in this paper provide pre-deployment diagnostic capability, the predictive--null-space decomposition theory provides mechanism-level understanding, and the GAC method provides inference-time mitigation. We look forward to future work extending these tools in several directions: validating the universality of authority inversion on larger-scale models and more diverse sensor modalities, testing the theoretical framework's applicability on real end-to-end longitudinal deployment data, and developing finer-grained attention-head-level intervention methods to improve GAC's safety--efficacy tradeoff. Additionally, exploring multi-agent LLM consensus mechanisms presents a promising avenue. Drawing inspiration from complex distributed optimization and multi-agent systems~\cite{chen2025confluence, chen2025multiagent}, future architectures could deploy multiple specialized LLMs that negotiate conflicting contextual signals to reach a robust consensus. More broadly, trustworthy conflict handling in LLM-mediated ubiquitous systems cannot be assumed from prompt inclusion or model scale alone; it must be audited and, where necessary, explicitly designed.

\bibliographystyle{ACM-Reference-Format}
\bibliography{references}

\appendix
\section{Proof Details}
\label{app:proofs}

\subsection{Proof of Lemma~\ref{lem:first_order}}

We have $\|\mathbf{h}_c\|^2 = r_0^2 + 2 r_0 \hat{\mathbf{u}}_0^\top\boldsymbol{\delta}_c + \|\boldsymbol{\delta}_c\|^2$. Let $\alpha = \hat{\mathbf{u}}_0^\top\boldsymbol{\delta}_c / r_0$. Then $\|\mathbf{h}_c\| = r_0(1 + \alpha) + O(\epsilon^2 r_0)$, so:
\[
\hat{\mathbf{u}}_c = \frac{\hat{\mathbf{u}}_0 + \boldsymbol{\delta}_c/r_0}{1+\alpha} + O(\epsilon^2) = \hat{\mathbf{u}}_0 + \frac{\boldsymbol{\delta}_c}{r_0} - \frac{\hat{\mathbf{u}}_0^\top\boldsymbol{\delta}_c}{r_0}\hat{\mathbf{u}}_0 + O(\epsilon^2) = \hat{\mathbf{u}}_0 + \frac{1}{r_0}\mathbf{P}_\perp\boldsymbol{\delta}_c + O(\epsilon^2)
\]

\subsection{Proof of Theorem~\ref{thm:decomp}}

Properties (i)--(iii) follow directly from the symmetric idempotency of $\boldsymbol{\Pi}_{\mathcal{A}}$:
\begin{itemize}[nosep]
\item (i) $\boldsymbol{\delta}_r + \boldsymbol{\delta}_\bot = \boldsymbol{\Pi}_{\mathcal{A}}\boldsymbol{\delta}_c^\perp + (\mathbf{I} - \boldsymbol{\Pi}_{\mathcal{A}})\boldsymbol{\delta}_c^\perp = \boldsymbol{\delta}_c^\perp$.
\item (ii) $\boldsymbol{\delta}_r^\top\boldsymbol{\delta}_\bot = (\boldsymbol{\Pi}_{\mathcal{A}}\boldsymbol{\delta}_c^\perp)^\top(\mathbf{I} - \boldsymbol{\Pi}_{\mathcal{A}})\boldsymbol{\delta}_c^\perp = \boldsymbol{\delta}_c^{\perp\top}\boldsymbol{\Pi}_{\mathcal{A}}(\mathbf{I} - \boldsymbol{\Pi}_{\mathcal{A}})\boldsymbol{\delta}_c^\perp = 0$.
\item (iii) Follows from (i) and (ii) by expanding $\|\boldsymbol{\delta}_c^\perp\|^2 = \|\boldsymbol{\delta}_r + \boldsymbol{\delta}_\bot\|^2$.
\end{itemize}
For property (iv): since $\tilde{\mathbf{w}}_{a_k} \in \mathcal{A}$ and $\boldsymbol{\delta}_\bot \in \mathcal{A}^\perp$, we have $\tilde{\mathbf{w}}_{a_k}^\top \boldsymbol{\delta}_\bot = 0$, so $\tilde{\mathbf{w}}_{a_k}^\top \boldsymbol{\delta}_c^\perp = \tilde{\mathbf{w}}_{a_k}^\top \boldsymbol{\delta}_r$.

\end{document}